%% file: main.tex
\documentclass{article}

\usepackage{iclr2027_conference,times}

\input{math_commands.tex}

\usepackage{titletoc}
\usepackage{hyperref}
\usepackage{url}
\usepackage{amssymb}
\usepackage{soul}
\usepackage{mathtools}
\usepackage{float}
\usepackage{booktabs}
\usepackage{tabularx}
\usepackage{array}
\usepackage{colortbl}
\usepackage{longtable}
\usepackage{graphicx}
\usepackage{amsmath}
\usepackage{subcaption}
\usepackage{multirow}
\usepackage{xcolor}
\usepackage{listings}

\usepackage{makecell}
\usepackage{amsthm}

\theoremstyle{definition}
\newtheorem{agsaecondition}{Condition}[section]

\theoremstyle{plain}
\newtheorem{agsaelemma}[agsaecondition]{Lemma}
\newtheorem{agsaeproposition}[agsaecondition]{Proposition}
\newtheorem{agsaecorollary}[agsaecondition]{Corollary}

\lstdefinestyle{semanticprompt}{
    language={},
    basicstyle=\scriptsize\ttfamily,
    columns=fullflexible,
    keepspaces=true,
    breaklines=true,
    breakatwhitespace=true,
    breakindent=0pt,
    showstringspaces=false,
    numbers=none,
    frame=br,
    framerule=0.3pt,
    rulecolor=\color{black!65},
    framesep=8pt,
    xleftmargin=8pt,
    xrightmargin=8pt,
    aboveskip=8pt,
    belowskip=12pt
}

\newsavebox{\agsaepanelA}
\newsavebox{\agsaepanelB}
\newsavebox{\agsaefull}
\newcolumntype{Y}{>{\centering\arraybackslash}X}

\makeatletter
\@ifundefined{AGtopbox}{\newsavebox{\AGtopbox}}{}
\@ifundefined{AGdbox}{\newsavebox{\AGdbox}}{}
\@ifundefined{AGtablewidth}{\newlength{\AGtablewidth}}{}
\makeatother

\title{When Trees Are Not Enough:
Learning Mixed-Topology Feature Graphs with
Adaptive Graph Sparse Autoencoders}

\author{
\textbf{Xiaozuo Shen}$^{1}$,
\textbf{Yifei Cai}$^{2}$,
\textbf{Tian Tan}$^{1}$,
\textbf{Rui Ning}$^{3}$,
\textbf{Chunsheng Xin}$^{2}$,
\textbf{Hongyi Wu}$^{1}$ \\
$^{1}$University of Arizona \\
$^{2}$Iowa State University \\
$^{3}$Old Dominion University \\
Correspondence:
\texttt{xiaozuoshen@arizona.edu},
\texttt{mhwu@arizona.edu}
}

\iclrfinalcopy

\begin{document}

\maketitle
\fancyhead{}
\renewcommand{\headrulewidth}{0pt}

\begin{abstract}

Sparse autoencoders (SAEs) expose interpretable features in large language
model activations, yet existing structured SAEs impose single-parent trees
or forests, while post-hoc graphs permit multiple parents but neither guide
feature learning nor ensure reliable relation recovery. We introduce the
Adaptive Graph Sparse Autoencoder (AG-SAE), a structure-guided training
paradigm that treats each feature's complete parent set as an atomic
structural hypothesis and lets evidence select zero, one, or multiple
parents. By competing complete parent sets against null, subset, and alternative explanations, AG-SAE identifies jointly necessary multi-parent relations while rejecting redundant or spurious alternatives and verifying that each child contributes beyond its parents. The induced topology over SAE features then defines a differentiable structural loss that guides SAE training, while topology-guided refinement mitigates feature absorption and uses persistent reconstruction gaps exposed by the learned structure to initialize new features. The entire graph is then induced again from the revised dictionary by reassessing every feature's complete parent set, closing the dictionary--graph self-consistency cycle.
Experiments demonstrate exact mixed-topology recovery in a controlled toy
model, greater relational reliability and semantic validity than structured
and post-hoc baselines on real LLM activations, and stronger feature-level causal interventions than conventional SAE features. AG-SAE thereby turns recovered mixed-topology feature structure into an unsupervised training signal that improves the dictionary, enables reliable feature organization beyond the topological limitations of trees, and exhibits stronger causal control beyond reconstruction.
\end{abstract}

\section{Introduction}
\vspace{-5pt}

Understanding how large language models (LLMs) represent and organize knowledge requires identifying interpretable features within their entangled high-dimensional activations and recovering the relations among those features. Sparse autoencoders (SAEs) provide such features by decomposing activations into sparse combinations from an overcomplete dictionary, but conventional SAEs learn flat dictionaries that reveal which features exist, not how they relate \citep{bricken2023monosemanticity,huben2024sparse}. As SAE capacity increases, broad concepts often split into finer-grained features \citep{bussmann2025matryoshka}. Hierarchical SAEs capture these levels of abstraction with nested or multiscale dictionaries, but primarily model hierarchy across dictionary scales rather than relations between individual features \citep{zaigrajew2025clip,bussmann2025matryoshka}. Structured SAEs go further by learning feature-level parent--child relations between broad and specialized features \citep{luo2026hsae,cao2026tree}, while typically imposing single-parent trees or forests that assign each child to only one parent.

However, post-hoc analyses of hierarchical SAEs have found multi-parent associations \citep{bussmann2025matryoshka}, and tree-structured SAE studies have acknowledged the limitations of the single-parent constraint \citep{luo2026hsae,cao2026tree}. Taken together, this evidence indicates that a single-parent tree is too restrictive a prior for feature organization. When a child requires multiple parents to explain its representation, any single-parent assignment omits necessary dependencies and distorts its semantic basis (Figure~\ref{fig:ag_sae_overview}); accumulated across a hierarchy, such omissions can mislead feature navigation and mechanistic analysis. In addition, standard metrics for reconstruction, sparsity, and individual feature quality do not assess relational fidelity \citep{gao2025scaling,karvonen2025saebench}. A structured SAE may score well while recovering an incomplete or misleading structure. Good feature recovery therefore does not guarantee reliable structure recovery.

Relaxing the single-parent constraint and adding pairwise edges post hoc merely expands the hypothesis space without ensuring reliable structure recovery. Activation co-occurrence, decoder similarity, local fit gains, redundancy, compositionality, and transitive ancestry can all induce spurious parent--child relations \citep{grandien2026hierarchies}. Multi-parent relations must therefore be identified as complete parent sets rather than assembled from independent edges. Yet moving to complete parent sets remains challenging, because a proposed set may still be displaced by a more parsimonious or equally plausible explanation. A child feature may also be a redundant copy of its parent features, carrying no new or more specific information, while a numerically supported association need not correspond to a semantically coherent broad-to-specific relation. Existing methods remain unable to resolve these ambiguities from LLM activations and recover reliable feature structure without imposing a single-parent topology \citep{bussmann2025matryoshka,luo2026hsae,cao2026tree}.

\begin{figure}[t]
    \centering
    \includegraphics[width=\linewidth]{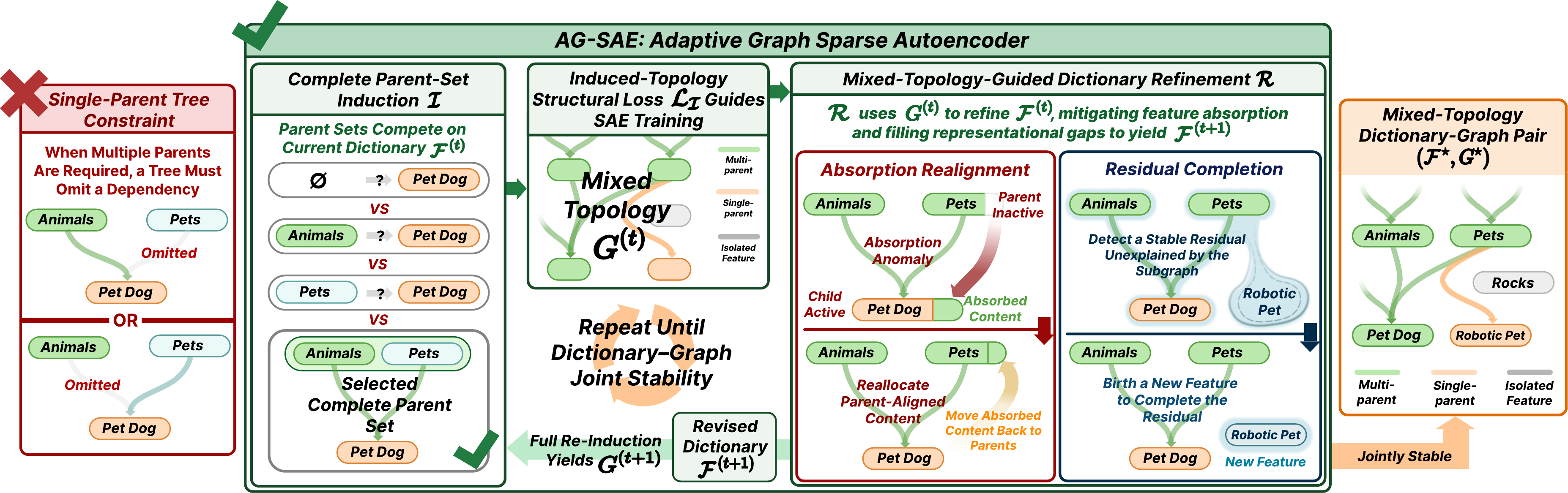}
    \vspace{-15pt}
    \caption{
    \textbf{Overview of AG-SAE.}
    The induced topology guides SAE training and dictionary refinement;
Full Re-Induction closes the cycle, which repeats until the
dictionary--graph pair stabilizes. This example is illustrative.
}
    \label{fig:ag_sae_overview}
    \vspace{-15pt}
\end{figure}

We therefore introduce the Adaptive Graph Sparse Autoencoder (AG-SAE), a structure-guided SAE training paradigm that treats feature structure as both the object to be recovered and an unsupervised signal for feature formation. AG-SAE couples dictionary and graph learning through a dictionary--graph self-consistency cycle. Given the current dictionary, it assigns each child a parent set by competing complete parent sets against null, subset, and alternative explanations, allowing isolated features, single-parent branches, and multi-parent refinements to emerge without a fixed topology. This distinguishes jointly necessary multi-parent refinements from redundant or spurious relations and verifies that each child retains a nonredundant contribution beyond its parents. The induced topology then guides feature-specific encoder and decoder updates, continuously reallocating parent-aligned content and using persistent reconstruction residuals exposed by the learned structure to seed new features where representational gaps remain, thereby expanding the dictionary. After each update, AG-SAE globally re-induces the feature graph under the revised dictionary, closing the dictionary--graph self-consistency cycle. Figure~\ref{fig:ag_sae_overview} summarizes this structure-guided refinement cycle. Experiments show that AG-SAE recovers feature structure with greater relational and semantic fidelity than structured SAE and post-hoc baselines (Section~\ref{sec:reliable-structure}). Beyond these structural gains, we further find that AG-SAE exhibits a non-trivial capability; across both feature addition and feature ablation~\citep{templeton2024scaling,marks2025sparse}, features learned under this paradigm support substantially more effective causal interventions than conventionally trained SAE features (Section~\ref{sec:causal-intervention}). \textbf{Taken together, AG-SAE not only learns a reliable mixed-topology feature graph from SAE representations, but also turns that structure into a training signal that improves the dictionary and enables capabilities beyond reconstruction.} Our contributions are as follows:

\vspace{-5pt}

\begin{enumerate}

\item We formulate feature structure recovery as a complete parent-set assignment problem in which each feature may have zero, one, or multiple parents. Comparing each complete parent set against null, subset, and alternative explanations distinguishes jointly necessary multi-parent refinements from redundant or spurious relations and separates inherited parent content from child-specific contributions. The resulting evidence determines each feature's parent count, extending structure recovery beyond single-parent trees.

\item We introduce AG-SAE, a structure-guided SAE training paradigm that uses recovered feature structure as an unsupervised signal for feature formation. Within its dictionary--graph self-consistency cycle, the induced topology guides feature-specific encoder and decoder updates that continuously reallocate parent-aligned content, while persistent graph-conditioned residuals seed new features that fill representational gaps and expand the dictionary. AG-SAE then globally re-induces all parent sets under the revised dictionary, coupling dictionary optimization to complete structural reassessment.

\item Through experiments, we show that
\textbf{(a)} AG-SAE achieves reliable mixed-topology recovery,
exactly matching all ground-truth complete parent sets in the
toy model and producing graphs with greater relational and
semantic fidelity than structured SAE and post-hoc graph baselines
on real activations. A concrete case further illustrates joint
parent support and nonredundant child content;
\textbf{(b)} AG-SAE's Complete Parent-Set Induction improves
multi-parent structural reliability, and its Absorption Realignment
substantially reduces feature absorption;
and \textbf{(c)} in both feature addition and feature ablation,
AG-SAE features support substantially more effective causal
interventions than conventionally trained SAE features.

\end{enumerate}

\vspace{-13pt}

\section{Related Work and Positioning}
\label{sec:related_work_positioning}
\vspace{-7pt}

Structured SAEs explicitly learn relations between individual features, but restrict each child to a single parent
\citep{muchane2025hierarchical,luo2026hsae,cao2026tree}. Post-hoc analyses
reveal multi-parent associations, yet co-activation and redundancy can
produce spurious relations
\citep{bussmann2025matryoshka,grandien2026hierarchies}. The gap is extending
SAE training beyond single-parent trees while reliably distinguishing
jointly necessary parent sets from redundant or spurious associations.
AG-SAE addresses this gap by evaluating complete parent sets against null,
subset, and alternative explanations. The induced topology then guides
dictionary training and refinement, followed by full re-induction under the
revised dictionary. Appendix~\ref{app:related_work} provides the full
discussion.

\vspace{-10pt}

\section{Adaptive Graph Sparse Autoencoder}

\vspace{-7pt}

The Adaptive Graph Sparse Autoencoder (AG-SAE) casts the learning of
feature structure as a self-consistency problem, jointly developing an SAE feature dictionary and a mixed-topology graph
over the relations among its features through iterative refinement cycles.

Given an activation domain $\mathcal X \subseteq \mathbb R^d$ associated
with a fixed LLM layer, let $\mathbf x \in \mathcal X$ denote an activation
vector drawn from distribution $\mathcal D$ and used as an SAE input during
dictionary training. For each feature $i$, $z_i(\mathbf x)$ denotes its
activation coefficient, and $\mathbf d_i \in \mathbb R^d$ denotes its
unit-norm decoder direction, with $\|\mathbf d_i\|_2=1$. Let
$\mathbf b \in \mathbb R^d$ denote the reconstruction bias. For an SAE
with $M$ features, the reconstruction is written as
$
\widehat{\mathbf x}
=
\sum_{i=1}^{M}
z_i(\mathbf x)\mathbf d_i
+
\mathbf b .
$
For notational simplicity, we omit the reconstruction bias from the
subsequent algorithmic development; see
Appendix~\ref{app:additional_method_details}.

At refinement cycle $t$, let $M_t$ denote the number of current
features. We use $c$ and $p$ to index child and candidate-parent features,
respectively. For a given child $c$, a candidate parent set
$P\subseteq\{1,\ldots,M_t\}\setminus\{c\}$ may satisfy
$P=\varnothing$ or contain one or multiple direct parents, with each
$p\in P$ denoting one candidate parent.
Let $P_c^{(t)}$ denote the parent set ultimately assigned to feature $c$
at refinement cycle $t$.
We then write the global SAE dictionary state and the corresponding
mixed-topology graph state as
{
    \setlength{\abovedisplayskip}{-1pt}
    \setlength{\abovedisplayshortskip}{-1pt}
    \setlength{\belowdisplayskip}{0pt}
    \setlength{\belowdisplayshortskip}{0pt}
\[
\mathcal F^{(t)}
\coloneqq
\left\{
\bigl(z_i^{(t)}(\cdot),\mathbf d_i^{(t)}\bigr)
\right\}_{i=1}^{M_t},
\qquad
G^{(t)}
\coloneqq
\left(P_c^{(t)}\right)_{c=1}^{M_t}.
\]
}Thus, $\mathcal F^{(t)}$ specifies the current SAE dictionary, while
$G^{(t)}$ specifies the relational organization among its features.
\textbf{As illustrated in Figure~\ref{fig:ag_sae_overview}, the AG-SAE refinement cycle proceeds as follows:}
\textbf{(i) Complete Parent-Set Induction $\mathcal I$.}
Starting from the current dictionary $\mathcal F^{(t)}$, $\mathcal I$ jointly evaluates complete
parent sets rather than independent parent--child relations, allowing
zero-, single-, and multi-parent structures to compete and yielding
$G^{(t)}$ (Section~\ref{sec:complete-parent-set-induction}).
\textbf{(ii) Structure-Guided SAE Training.}
In addition to the conventional SAE reconstruction objective
$\mathcal L_{\mathrm{rec}}$~\citep{bricken2023monosemanticity}, AG-SAE
introduces a graph-guided structural loss $\mathcal L_{\mathcal I}$, the
differentiable training counterpart of the relational criteria used by
Complete Parent-Set Induction $\mathcal I$. Starting from
$\mathcal F^{(t)}$ and using $G^{(t)}$ as the structural reference
for this update, the dictionary is optimized with
{
    \setlength{\abovedisplayskip}{0pt}
    \setlength{\abovedisplayshortskip}{0pt}
    \setlength{\belowdisplayskip}{0pt}
    \setlength{\belowdisplayshortskip}{0pt}
\begin{equation}
\label{eq:joint-objective}
\mathcal L_{\mathrm{rec}}\!\left(\mathcal F^{(t)}\right)
+
\mathcal L_{\mathcal I}\!\left(\mathcal F^{(t)};G^{(t)}\right).
\end{equation}
}The structural loss encourages the learned representations to better
support the relations selected by $\mathcal I$, thereby feeding the
induced structure back into dictionary optimization and driving the
dictionary--graph pair toward self-consistency. For conciseness and to avoid redundancy, the underlying relational
criteria are detailed in Section~\ref{sec:complete-parent-set-induction},
with the differentiable formulation provided in
Appendix~\ref{app:structural_training_loss}.
\textbf{(iii) Mixed-Topology-Guided Dictionary Refinement $\mathcal R$.}
Beyond the global structure-guided training above, $\mathcal R$ further
refines the dictionary under the guidance of $G^{(t)}$ by realigning
absorbed parent-related representation and filling persistent
representational gaps, producing $\mathcal F^{(t+1)}$
(Section~\ref{sec:mixed-topology-refinement}).
\textbf{(iv) Full Re-Induction.}
Finally, reapplying $\mathcal I$ to $\mathcal F^{(t+1)}$ yields the next
graph $G^{(t+1)}$, and the cycle repeats until the dictionary--graph pair
stabilizes (Section~\ref{sec:full-reinduction-stability}). 
The overall AG-SAE refinement cycle is therefore summarized as
{
    \setlength{\abovedisplayskip}{1pt}
    \setlength{\abovedisplayshortskip}{1pt}
    \setlength{\belowdisplayskip}{0pt}
    \setlength{\belowdisplayshortskip}{0pt}
    \large
    \[
    \mathcal F^{(t)}
    \xrightarrow{\mathcal I}
    G^{(t)}
    \;\xRightarrow{\;\mathcal L_{\mathrm{rec}}+\mathcal L_{\mathcal I}
    \;\rightarrow\;\mathcal R\;}
    \mathcal F^{(t+1)}
    \xrightarrow{\mathcal I}
    G^{(t+1)}.
    \]
}

\vspace{-15pt}

\subsection{Complete Parent-Set Induction $\mathcal I$}
\label{sec:complete-parent-set-induction}

\vspace{-5pt}

Complete Parent-Set Induction $\mathcal I$ induces the graph $G^{(t)}$
from the current dictionary $\mathcal F^{(t)}$ by jointly evaluating each
candidate parent set $P$, rather than deciding each parent--child relation
separately, which empirically improves structural reliability
(Section~\ref{sec:ablation}). We omit the cycle superscript $(t)$ below. The indicator $a_i(\mathbf x)$ records whether
feature $i$ is active, while $a_P(\mathbf x)$ indicates whether all
features in $P$ are jointly active:
{
    \setlength{\abovedisplayskip}{0pt}
    \setlength{\abovedisplayshortskip}{0pt}
    \setlength{\belowdisplayskip}{0pt}
    \setlength{\belowdisplayshortskip}{0pt}
    \[
    a_i(\mathbf x)
    \coloneqq
    \mathbf 1\!\left\{z_i(\mathbf x)>0\right\},
    \qquad
    a_P(\mathbf x)
    \coloneqq
    \prod_{p\in P}a_p(\mathbf x).
    \]
}\textbf{Parent-Set Support for the Child.} To ensure that the candidate parent set $P$ sufficiently covers the activation regime of child $c$, AG-SAE requires
$\Pr(a_P=1\mid a_c=1)\geq\tau_{\mathrm{cov}}$. Beyond activation coverage, AG-SAE requires the candidate parents to
jointly match the child's representation, measured by
$\mathcal S_{\mathrm{rep}}(c,P)$, which compares the child representation
$z_c(\mathbf x)\mathbf d_c$, combining its direction and activation
strength, with the joint parent representation
$\sum_{p\in P} z_p(\mathbf x)\mathbf d_p$: 
{
    \setlength{\abovedisplayskip}{0pt}
    \setlength{\abovedisplayshortskip}{0pt}
    \setlength{\belowdisplayskip}{0pt}
    \setlength{\belowdisplayshortskip}{0pt}
    \begin{equation}
    \label{eq:parent-support}
    \mathcal S_{\mathrm{rep}}(c,P)
    \coloneqq
    1-
    \frac{
    \mathbb E_{\mathbf x\sim\mathcal D}
    \!\left[
    \left\|
    z_c(\mathbf x)\mathbf d_c
    -
    \sum_{p\in P}z_p(\mathbf x)\mathbf d_p
    \right\|_2^2
    \,\middle|\,
    a_c(\mathbf x)=1
    \right]
    }{
    \mathbb E_{\mathbf x\sim\mathcal D}
    \!\left[
    \left\|
    z_c(\mathbf x)\mathbf d_c
    \right\|_2^2
    \,\middle|\,
    a_c(\mathbf x)=1
    \right]
    }
    \geq
    \tau_{\mathrm{rep}}.
    \end{equation}
}A higher score $\mathcal S_{\mathrm{rep}}(c,P)$ indicates that the candidate parent set more closely matches the child's representation. Since $\mathcal S_{\mathrm{rep}}$ evaluates the candidate set $P$ jointly, parent sets with different cardinalities can compete under the same criterion. A candidate $P$ is retained only if its score exceeds those of the competing sets by a margin. This enables AG-SAE to extend beyond the single-parent restriction while recovering more reliable structure than independently assembled parent--child relations (Section~\ref{sec:ablation}). Details are provided in Appendix~\ref{app:complete_parent_set_induction}.

\textbf{Child Contribution Beyond Parents.} We further require $c$ to provide reconstruction contribution beyond its
complete parent set. Let $\mathcal E_{c,P}(S)$ denote the reconstruction error from feature set $S$, evaluated on samples where $c$ and all features in
$P$ are jointly active:
{
    \setlength{\abovedisplayskip}{0pt}
    \setlength{\abovedisplayshortskip}{0pt}
    \setlength{\belowdisplayskip}{1pt}
    \setlength{\belowdisplayshortskip}{1pt}
    \begin{equation}
    \label{eq:child-innovation}
    \begin{aligned}
    \mathcal E_{c,P}(S)
    &\coloneqq
    \min_{\boldsymbol\beta\geq\mathbf 0}
    \mathbb E_{\mathbf x\sim\mathcal D}
    \!\left[
    \left\|
    \mathbf x-
    \sum_{i\in S}\beta_i z_i(\mathbf x)\mathbf d_i
    \right\|_2^2
    \,\middle|\,
    a_c(\mathbf x)a_P(\mathbf x)=1
    \right],
    \\
    \mathcal S_{\mathrm{inn}}(c,P)
    &\coloneqq
    \frac{
    \mathcal E_{c,P}(P)
    -
    \mathcal E_{c,P}(P\cup\{c\})
    }{
    \mathcal E_{c,P}(\varnothing)
    }
    \geq
    \tau_{\mathrm{inn}}.
    \end{aligned}
    \end{equation}
}The score $\mathcal S_{\mathrm{inn}}(c,P)$ measures the
reduction in reconstruction error when the child $c$ is added to its
parent set, comparing reconstruction with $P$ alone against
$P\cup\{c\}$. The denominator $\mathcal E_{c,P}(\varnothing)$ normalizes
this gain by the zero-reconstruction error. A higher score therefore
indicates that the child provides additional representational value beyond
$P$. Together with parent-set support, this helps reject
redundant or spurious parent--child relations. Fitting details are provided in
Appendix~\ref{app:complete_parent_set_induction}.
Together, these criteria allow $\mathcal I$ to select one parent set per
child under global acyclicity, yielding $G=\mathcal I(\mathcal F)$.
The resulting graph, together with these criteria, defines the structural
signal $\mathcal L_{\mathcal I}$ that guides dictionary optimization (Appendix~\ref{app:structural_training_loss}).

\vspace{-9pt}

\subsection{Mixed-Topology-Guided Dictionary Refinement $\mathcal R$}
\label{sec:mixed-topology-refinement}

\vspace{-6pt}

The operator $\mathcal R$ refines the feature dictionary using induced
topology $G$. Learned parent--child relations guide
Absorption Realignment to redistribute misallocated parent-related
representation, while persistent graph-conditioned residuals guide
Residual Completion to fill representational gaps.

\vspace{-7pt}

\paragraph{Absorption Realignment.} Feature absorption can cause a child to carry representation aligned with
its broader parents~\citep{chanin2025absorption}. Using the parent set $P_c$ recovered by $\mathcal I$, we decompose the decoder direction of each child
$c$ with $P_c\neq\varnothing$ as
$\mathbf d_c=\sum_{p\in P_c}\delta_{cp}\mathbf d_p+\mathbf r_c$,
where each $\delta_{cp}\geq0$ measures how strongly the child direction
$\mathbf d_c$ aligns with parent direction $\mathbf d_p$, while
$\mathbf r_c$ represents the child-specific direction retained after
separating the parent-aligned components. Collecting these
parent-alignment coefficients as
$\boldsymbol\delta_c=(\delta_{cp})_{p\in P_c}$, we estimate them jointly
through nonnegative ridge fitting, where $\lambda_\delta>0$ controls
the ridge regularization strength:
{
    \setlength{\abovedisplayskip}{1pt}
    \setlength{\abovedisplayshortskip}{1pt}
    \setlength{\belowdisplayskip}{1pt}
    \setlength{\belowdisplayshortskip}{1pt}
    \begin{equation}
    \label{eq:allocation-fit}
    \boldsymbol{\delta}_c
    \coloneqq
    \arg\min_{\boldsymbol{\delta}_c\geq\mathbf 0}
    \left[
    \left\|
    \mathbf d_c
    -
    \sum_{p\in P_c}
    \delta_{cp}\mathbf d_p
    \right\|_2^2
    +
    \lambda_\delta
    \left\|\boldsymbol{\delta}_c\right\|_2^2
    \right],
    \qquad
    \mathbf r_c
    \coloneqq
    \mathbf d_c
    -
    \sum_{p\in P_c}
    \delta_{cp}\mathbf d_p .
    \end{equation}
}The decomposition above identifies which parts of the child direction
align with its parents and which remain specific to the child.
Absorption Realignment uses this structure to redistribute representation
within the local parent--child subgraph during training: parent-aligned
content is routed along the corresponding parent directions, while child-specific
content remains with the child. The same local contribution can be written without changing the reconstruction:
{
    \setlength{\abovedisplayskip}{2pt}
    \setlength{\abovedisplayshortskip}{2pt}
    \setlength{\belowdisplayskip}{0pt}
    \setlength{\belowdisplayshortskip}{0pt}
    \[
    z_c(\mathbf x)\mathbf d_c
    +\sum_{p\in P_c}z_p(\mathbf x)\mathbf d_p
    =
    \|\mathbf r_c\|_2 z_c(\mathbf x)
    \frac{\mathbf r_c}{\|\mathbf r_c\|_2}
    +\sum_{p\in P_c}
    \left(
    z_p(\mathbf x)+\delta_{cp}z_c(\mathbf x)
    \right)\mathbf d_p.
    \]
}Under this decomposition, Absorption Realignment routes parent-aligned
content out of the child and back along the corresponding parent directions,
while continuously adapting the child--parent allocation during training.
This directly mitigates local feature absorption (Section~\ref{sec:ablation}). Implementation details
are provided in
Appendix~\ref{app:mixed_topology_dictionary_optimization}.

\vspace{-7pt}

\paragraph{Residual Completion.}
Absorption Realignment mitigates absorption among related
features, whereas Residual Completion fills persistent representational
gaps. Let $\mathbf e(\mathbf x)\coloneqq
\mathbf x-\widehat{\mathbf x}$ denote the reconstruction residual of the
current SAE. When child $c$ and all features in $P_c$ are active, we define
$\mathbf e_c(\mathbf x)\coloneqq
a_c(\mathbf x)a_{P_c}(\mathbf x)\mathbf e(\mathbf x)$ to retain the
residual associated with this parent--child relation.
A stable direction in $\mathbf e_c$ indicates a persistent
representational gap. Residual Completion reuses a matching existing
feature when possible; otherwise, it learns a new sparse feature through:
{
    \setlength{\abovedisplayskip}{0pt}
    \setlength{\abovedisplayshortskip}{0pt}
    \setlength{\belowdisplayskip}{0pt}
    \setlength{\belowdisplayshortskip}{0pt}
    \begin{equation}
    \label{eq:residual-completion}
    \min_{\substack{z\geq0\\\|\mathbf d\|_2=1}}
    \mathbb E_{\mathbf x\sim\mathcal D}
    \left[
    \left\|
    \mathbf e_c(\mathbf x)
    -
    z(\mathbf x)\mathbf d
    \right\|_2^2
    +
    \lambda_{\mathrm{sp}}z(\mathbf x)
    \right],
    \end{equation}
}where $\lambda_{\mathrm{sp}}>0$ controls activation sparsity. The new feature captures the persistent residual component exposed by the learned
relation, thereby expanding the dictionary to better fill a previously unexplained
representational gap. Implementation details are provided in
Appendix~\ref{app:mixed_topology_dictionary_optimization}.

\vspace{-8pt}

\subsection{Full Re-Induction and Joint Stability}
\label{sec:full-reinduction-stability}

\vspace{-6pt}

After $\mathcal R$ uses the induced graph $G^{(t)}$ to refine the
dictionary into $\mathcal F^{(t+1)}$, Full Re-Induction closes the
refinement cycle by reapplying $\mathcal I$ to the updated dictionary. Each
feature therefore reassesses its parent set under the updated
representations, yielding
{
    \setlength{\abovedisplayskip}{1pt}
    \setlength{\abovedisplayshortskip}{1pt}
    \setlength{\belowdisplayskip}{1pt}
    \setlength{\belowdisplayshortskip}{1pt}
    \[
    G^{(t+1)}
    =
    \mathcal I\!\left(\mathcal F^{(t+1)}\right)
    =
    \left(P_c^{(t+1)}\right)_{c=1}^{M_{t+1}}.
    \]
}AG-SAE terminates when both the graph and feature contributions
stabilize across a refinement cycle. We measure the changes between
successive cycles by $d_G$ for the graph and $d_F$ for the feature
contributions on held-out activations. For tolerances
$\gamma_G,\gamma_F>0$, AG-SAE stops when
{
    \setlength{\abovedisplayskip}{1pt}
    \setlength{\abovedisplayshortskip}{1pt}
    \setlength{\belowdisplayskip}{1pt}
    \setlength{\belowdisplayshortskip}{1pt}
    \[
    d_G\!\left(G^{(t+1)},G^{(t)}\right)
    \leq
    \gamma_G,
    \qquad
    d_F\!\left(
    \mathcal F^{(t+1)},\mathcal F^{(t)}
    \right)
    \leq
    \gamma_F.
    \]
}Appendix~\ref{app:full_reinduction_stability} gives estimator details, while Figure~\ref{fig:graph_refinement_dynamics} shows the trajectories of $d_F$ and $d_G$ across refinement cycles. We also provide a theoretical analysis of dictionary--graph stability in Appendix~\ref{app:stability_theory}. When both conditions hold, AG-SAE returns the final dictionary--graph pair
{
    \setlength{\abovedisplayskip}{2pt}
    \setlength{\abovedisplayshortskip}{2pt}
    \setlength{\belowdisplayskip}{0pt}
    \setlength{\belowdisplayshortskip}{0pt}
    \[
    (\mathcal F^\star,G^\star)
    =
    (\mathcal F^{(t+1)},G^{(t+1)}).
\]
}

\vspace{-12pt}

\section{Experimental Evaluation}
\vspace{-10pt}

We evaluate AG-SAE by \textbf{(a)} measuring its ability to recover structure on a mixed-topology toy model with known ground-truth structure; \textbf{(b)} assessing out-of-sample relational reliability and
blinded semantic validity on real LLM activations, complemented by a feature-level case study; \textbf{(c)} further demonstrating that features learned under the AG-SAE paradigm exhibit stronger causal intervention capabilities; and \textbf{(d)} ablating its core operators to isolate their contributions.

\vspace{-10pt}

\subsection{Experimental Setup}
\vspace{-5pt}

\textbf{Models, data, and baselines.}
SAE baselines use BatchTopK activations \citep{bussmann2024batchtopk}. Our main setting studies layer-13 residual-stream features of Gemma-2-2B on MiniPile \citep{gemmateam2024gemma2,kaddour2023minipile}. We compare AG-SAE with Vanilla SAE Post-hoc, a strong post-hoc baseline using the same mixed-topology search space, Matryoshka SAE with scaled-MCS DAG and best-MCS tree projections, Tree SAE, and HSAE \citep{huben2024sparse,bussmann2025matryoshka,cao2026tree,luo2026hsae}. Toy-model experiments follow SynthSAEBench and Matryoshka SAE \citep{chanin2026synthsaebench,bussmann2025matryoshka}, with cross-model and cross-domain evaluations on Qwen3.5-2B-Base and PubMed \citep{qwen2026qwen35,nlm2025pubmed}. Feature labeling and blinded relation evaluation use Qwen3-30B-A3B under SAEBench AutoInterp and HSAE-style protocols \citep{qwen2025qwen3,karvonen2025saebench,luo2026hsae}.

\textbf{Training and evaluation protocol.}
Hierarchical methods and AG-SAE use matched four-scale dictionaries, with matched training steps and data budgets and comparable sparsity operating points. Structure induction and threshold validation use disjoint data; the resulting dictionary, graph, and thresholds are then frozen for a separate report-only evaluation that cannot affect structure construction, model selection, or dictionary adaptation. Causal interventions evaluate bidirectional feature addition and ablation on 416 held-out prompts spanning 26 first-letter concepts. Complete dictionary widths, layer choices, training specifications, baseline implementations, cross-setting results, and experiment-specific hyperparameters are provided in Appendices~\ref{app:additional_results} and~\ref{app:experimental_setup}.

\begin{table*}[!t]
\centering
\caption{
\textbf{AG-SAE structure recovery in toy and real LLM settings.}
\textbf{A.} Dictionary quality and structure recovery on the ground-truth
mixed-topology toy model. In A, \(\dagger\) denotes N/A for methods without
a native parent graph; Vanilla SAE's eight no-parent matches reflect
the default empty parent set.
\textbf{B.} Final AG-SAE feature graph on Gemma-2-2B layer-13 activations
from MiniPile. Panels C--D use the same setting. Category percentages use the \(30{,}718\) active features
(\(99.99\%\) of \(30{,}720\)); Multi / parented is the fraction of parented
features with multiple parents.
\textbf{C.} Dictionary and graph statistics across refinement cycles.
\textbf{D.} Representation quality and relation validation across methods
(Abs., Feature Absorption; AutoInt., AutoInterp).
PSV, NR, and SV denote Predictive Structural Validity, Non-Redundancy,
and Semantic Validity; Joint denotes their intersection.
\(\dagger\) marks Tree SAE's lower \(L_0\) from its native sparsification,
which also gives its Abs.\ score a sparsity advantage.
}
\label{tab:ag-sae-combined}
\label{tab:real-llm-validation}

\vspace{-3pt}

\begingroup
\footnotesize

\setlength{\tabcolsep}{2.5pt}
\setlength{\arrayrulewidth}{0.6pt}
\setlength{\aboverulesep}{0.8pt}
\setlength{\belowrulesep}{0.8pt}
\renewcommand{\arraystretch}{1.15}

\def\agsaerow{\rule[-0.75ex]{0pt}{3.0ex}}

\def\AGDheadstrut{\rule[-1.1ex]{0pt}{3.8ex}}


\sbox{\AGtopbox}{%
\setlength{\arrayrulewidth}{\heavyrulewidth}%
\begin{tabular}{
    @{}c
    @{\hspace{5pt}}
    !{\vrule width 0.6pt}
    @{\hspace{5pt}}
    c@{}
}
\Xhline{1.5pt}

\begin{tabular}[t]{
    @{}l
    !{\vrule width 0.45pt}
    cc
    !{\vrule width 0.45pt}
    cc@{}
}

\multicolumn{5}{c}{
    \agsaerow\textbf{A. Toy-model evaluation}
}
\\[2pt]

\midrule

\multicolumn{1}{c}{\agsaerow}
&
\multicolumn{2}{c}{\textbf{Dictionary quality}}
&
\multicolumn{2}{c}{\textbf{Structure recovery}}
\\

\specialrule{\lightrulewidth}{0pt}{0pt}

\agsaerow\textbf{Method}
& \textbf{Test}
& \textbf{Feat.}
& \textbf{Exact PS /24} \(\uparrow\)
& \textbf{HN Rejected}
\\[-1pt]

\agsaerow
& \(R^2\) \(\uparrow\)
& \textbf{/24} \(\uparrow\)
& \textbf{0/1/2-Parent}
& \textbf{/32} \(\uparrow\)
\\

\midrule

\agsaerow Vanilla SAE
& 1.00
& \textbf{24}
& \(8,(8/0/0)\)
& \textemdash\(\dagger\)
\\

\agsaerow Matryoshka SAE
& 0.92
& 19
& \(8,(8/0/0)\)
& \textemdash\(\dagger\)
\\

\agsaerow Tree SAE
& 0.91
& 10
& \(8,(6/2/0)\)
& 28
\\

\agsaerow HSAE
& 0.99
& 23
& \(10,(8/2/0)\)
& 25
\\

\agsaerow\textbf{AG-SAE (ours)}
& \textbf{1.00}
& \textbf{24}
& \(\mathbf{24},(\mathbf{8}/\mathbf{8}/\mathbf{8})\)
& \textbf{32}
\\

\end{tabular}
&
\begin{tabular}[t]{
    @{}lr
    @{\hspace{5pt}}
    !{\vrule width 0.6pt}
    @{\hspace{5pt}}
    l
    r@{\hspace{8pt}}
    r@{\hspace{8pt}}
    r@{}
}

\multicolumn{2}{
    c@{\hspace{5pt}}
    !{\vrule width 0.6pt}
    @{\hspace{5pt}}
}{
    \agsaerow\textbf{B. Final mixed-topology graph}
}
&
\multicolumn{4}{c}{
    \textbf{C. Refinement across cycles}
}
\\[2pt]

\midrule

\agsaerow\textbf{Metric}
& \textbf{Result}
& \textbf{Metric}
& \(G^{(0)}\)
& \(G^{(2)}\)
& \(G^{(5)}\)
\\

\midrule

\agsaerow Total features
& 30,720
& Single-parent
& 2,343
& 5,142
& 5,421
\\

\agsaerow Isolated
& 19,912~\textbf{(64.82\%)}
& Multi-parent
& 782
& 1,637
& 1,670
\\

\agsaerow Roots
& 3,715~\textbf{(12.09\%)}
& Absorption realignment
& \textemdash
& 5,973
& 7,068
\\

\agsaerow Single-parent
& 5,421~\textbf{(17.65\%)}
& Residual completion
& \textemdash
& 161
& 12
\\

\agsaerow Multi-parent
& 1,670~\textbf{(5.44\%)}
& Relation gain \(\uparrow\)
& 0.0456
& 0.0875
& 0.0912
\\

\agsaerow Multi / parented
& \textbf{23.55\%}
& Graph change \(\boldsymbol{d_G}\) (\%) \(\downarrow\)
& \textemdash
& \textbf{18.20\%}
& \textbf{3.06\%}
\\

\agsaerow Longest path
& 8 edges
& Feature change \(\boldsymbol{d_F}\) (\%) \(\downarrow\)
& \textemdash
& \textbf{68.45\%}
& \textbf{9.25\%}
\\

\end{tabular}
\\
\Xhline{1.5pt}
\end{tabular}%
}


\setlength{\arrayrulewidth}{\lightrulewidth}

\def\AGDdividerwidth{0.6pt}
\def\AGDbody{%
\specialrule{1.5pt}{0pt}{0pt}

\multirow{2}{*}{%
    \shortstack[l]{%
        \textbf{D. Real LLM}\\
        \textbf{validation}%
    }%
}
&
\multicolumn{4}{c!{\vrule width \AGDdividerwidth}}{
    \textbf{Representation quality}
}
&
\multicolumn{7}{c}{
    \textbf{Recovered-structure reliability}
}
\\

\cline{2-12}

\AGDheadstrut
&
\multicolumn{4}{c!{\vrule width \AGDdividerwidth}}{
    \textbf{Global dictionary}
}
&
\multicolumn{3}{c!{\vrule width 0.6pt}}{
    \textbf{Single-parent} (\%) \(\uparrow\)
}
&
\multicolumn{3}{c!{\vrule width 0.6pt}}{
    \textbf{Multi-parent} (\%) \(\uparrow\)
}
&
\textbf{Joint} (\%) \(\uparrow\)
\\

\cline{2-12}

\AGDheadstrut\textbf{Method}
& \textbf{EV} \(\uparrow\)
& \(\boldsymbol{L_0}\)
& \textbf{Abs.} \(\downarrow\)
& \textbf{AutoInt.} \(\uparrow\)
& \textbf{PSV}
& \textbf{NR}
& \textbf{SV}
& \textbf{PSV}
& \textbf{NR}
& \textbf{SV}
&
\mbox{%
    \textbf{PSV}\hspace{0.8pt}\(\cap\)\hspace{0.8pt}%
    \textbf{NR}\hspace{0.8pt}\(\cap\)\hspace{0.8pt}%
    \textbf{SV}%
}
\\

\midrule

\textbf{AG-SAE}
& 0.670
& 50.13
& 9.16
& \textbf{86.44}
& \textbf{99.75}
& \textbf{97.00}
& \textbf{76.88}
& \textbf{97.00}
& \textbf{91.88}
& \textbf{75.00}
& \textbf{69.56}
\\

Vanilla SAE Post-hoc
& 0.668
& 50.00
& 51.39
& 80.03
& 71.50
& 75.50
& 69.13
& 32.63
& 65.75
& 58.25
& 40.63
\\

Matryoshka SAE (DAG)
& \textbf{0.680}
& 50.00
& 12.71
& 81.02
& 60.38
& 66.25
& 30.38
& 35.38
& 57.00
& 35.00
& 25.44
\\

Matryoshka SAE (Tree)
& \textbf{0.680}
& 50.00
& 12.71
& 81.02
& 48.25
& 54.50
& 24.25
& \multicolumn{3}{c!{\vrule width 0.6pt}}{N/A}
& 22.38
\\

HSAE
& 0.665
& 50.00
& 10.24
& 79.90
& 92.88
& 92.13
& 56.88
& \multicolumn{3}{c!{\vrule width 0.6pt}}{N/A}
& 51.13
\\

Tree SAE
& 0.663
& \(40.84^{\dagger}\)
& \textbf{5.11}
& 75.14
& 24.63
& 24.13
& 23.25
& \multicolumn{3}{c!{\vrule width 0.6pt}}{N/A}
& 18.75
\\

\specialrule{1.5pt}{0pt}{0pt}
}

\sbox{\AGdbox}{%
\begin{tabular}{
    @{}lcccc
    !{\vrule width \AGDdividerwidth}
    ccc
    !{\vrule width 0.6pt}
    ccc
    !{\vrule width 0.6pt}
    c@{}
}
\AGDbody
\end{tabular}%
}

\setlength{\AGtablewidth}{\wd\AGtopbox}
\ifdim\wd\AGdbox>\AGtablewidth
    \setlength{\AGtablewidth}{\wd\AGdbox}
\fi


\resizebox{\linewidth}{!}{%
\begin{tabular}{@{}c@{}}

\makebox[\AGtablewidth][c]{\usebox{\AGtopbox}}
\\
\noalign{\vskip 8pt}

\begin{tabular*}{\AGtablewidth}{
    @{\extracolsep{\fill}}
    lcccc
    !{\vrule width \AGDdividerwidth}
    ccc
    !{\vrule width 0.6pt}
    ccc
    !{\vrule width 0.6pt}
    c
    @{}
}
\AGDbody
\end{tabular*}

\end{tabular}%
}

\endgroup

\vspace{-10pt}
\end{table*}

\vspace{-7pt}

\subsection{Exact Mixed-Topology Recovery in a Controlled Toy Model}
\vspace{-5pt}

Since no reliable ground-truth structure is available for real LLM
activations, we evaluate structure recovery in a mixed-topology toy
model constructed following the generation procedure of
SynthSAEBench~\citep{chanin2026synthsaebench}. Observations are sparse
linear superpositions of \(24\) predefined feature directions with
positive activation magnitudes. Following the parent-conditioned
activation mechanism of the Matryoshka SAE toy
model~\citep{bussmann2025matryoshka}, a child can activate only when
all its parents are active. The graph contains eight features each
with zero, one, and two parents
(Appendix Figure~\ref{fig:toy_model}).
We also construct \(32\) hard negatives spanning correlation-only
candidates, transitive ancestors, and incomplete subsets of true
multi-parent sets (Appendix~\ref{app:toy_model_setup}).

At matched sparsity, AG-SAE attains \(R^2=1.00\), recovers all
\(24/24\) features and \(24/24\) complete parent sets, and rejects
all \(32/32\) hard negatives
(Table~\ref{tab:ag-sae-combined}, Panel A).
Vanilla SAE also recovers all \(24/24\) features with \(R^2=1.00\);
its lack of a native parent graph leaves only the eight default
no-parent matches. Tree SAE and HSAE each recover only two of the
eight single-parent sets and none of the eight two-parent sets.
These comparisons establish that accurate reconstruction and feature
matching do not determine feature organization. In this controlled
setting, AG-SAE recovers the full mixed topology and rejects all
tested spurious structural explanations.

\vspace{-5pt}

\subsection{Reliable Structure Learning from Real LLM Activations}
\label{sec:reliable-structure}

\vspace{-3pt}

\paragraph{Learned Mixed-Topology Feature Graph.}
On layer-13 Gemma-2-2B activations from MiniPile, AG-SAE learns a feature
graph containing isolated features, roots, single-parent relations, and
multi-parent relations. Panel B of Table~\ref{tab:ag-sae-combined} shows
that multi-parent features comprise \textbf{23.55\%} of all parented
features, establishing a substantial non-tree component in the learned
topology. Tree SAE and HSAE inherit graph depth from four predefined
dictionary levels, while AG-SAE's longest directed path spans eight edges,
or nine feature levels, demonstrating that its joint dictionary--graph
training learns relational depth beyond a prescribed dictionary hierarchy.
Across refinement cycles, held-out relation gain rises from
\(4.56\%\) to \(9.12\%\), while \(d_F\) and \(d_G\) decline
after \(G^{(2)}\) to \(9.25\%\) and \(3.06\%\), respectively
(Table~\ref{tab:ag-sae-combined}, Panel C), supporting progressive
joint stabilization of the dictionary--graph pair as relation gain approaches a plateau.
Appendix~\ref{app:graph_coverage_validity} provides full refinement trajectories
and analysis (Figure~\ref{fig:graph_refinement_dynamics}), while
Appendix~\ref{app:cross_setting_results} reports cross-model and cross-domain results.

\vspace{-4pt}

\paragraph{Relation Reliability Validation.}
With the dictionary, graph, and thresholds frozen, Panel D of
Table~\ref{tab:ag-sae-combined} combines two activation-based structural
diagnostics with blinded semantic validation on a disjoint report-only
split. Predictive Structural Validity (PSV) measures whether recovered
parent--child dependencies remain stable and predictively reliable on
unseen activations. Non-Redundancy (NR) measures the child's additional
held-out reconstruction contribution beyond its parents. Both diagnostics are method-agnostic and use the same held-out criteria for all approaches. Semantic Validity (SV) tests relational coherence in
human-interpretable natural language using frozen semantic labels generated
following the SAEBench protocol~\citep{karvonen2025saebench} and blinded
judgments, neither of which enters induction.
Joint (\(\mathrm{PSV}\cap\mathrm{NR}\cap\mathrm{SV}\)) reports the fraction passing all three criteria on the same fixed, arity-matched relation samples.

AG-SAE leads all six single- and multi-parent reliability measures.
Its single-parent PSV, NR, and SV reach 99.75\%, 97.00\%, and 76.88\%, exceeding the best competing
value per metric by \(6.87\), \(4.87\), and \(7.75\) percentage
points. The gap is decisive for multi-parent relations.
AG-SAE reaches 97.00\% PSV, 91.88\% NR, and 75.00\% SV, leading the strongest baselines reporting each
metric by \(61.62\), \(26.13\), and \(16.75\) points.
Its 69.56\% Joint rate exceeds the full-space
Vanilla SAE Post-hoc comparator by \(28.93\) points.
These establish predictive reliability on unseen activations,
nonredundant child content, and the strongest blinded semantic coherence
among evaluated methods.

At comparable sparsity, AG-SAE records a 0.01\% dead-feature
rate, the highest AutoInterp score of 86.44\%, and the lowest
absorption among methods near \(L_0=50\), with EV within \(0.010\)
of the best reported value. Tree SAE's lower raw absorption at reduced
\(L_0\) accompanies weak relation-validation results, showing that
absorption alone does not determine structural reliability. AG-SAE thus achieves the strongest relation reliability with competitive representation quality.

\vspace{-5pt}

\begin{figure}[!t]
    \centering
    \includegraphics[width=\linewidth]{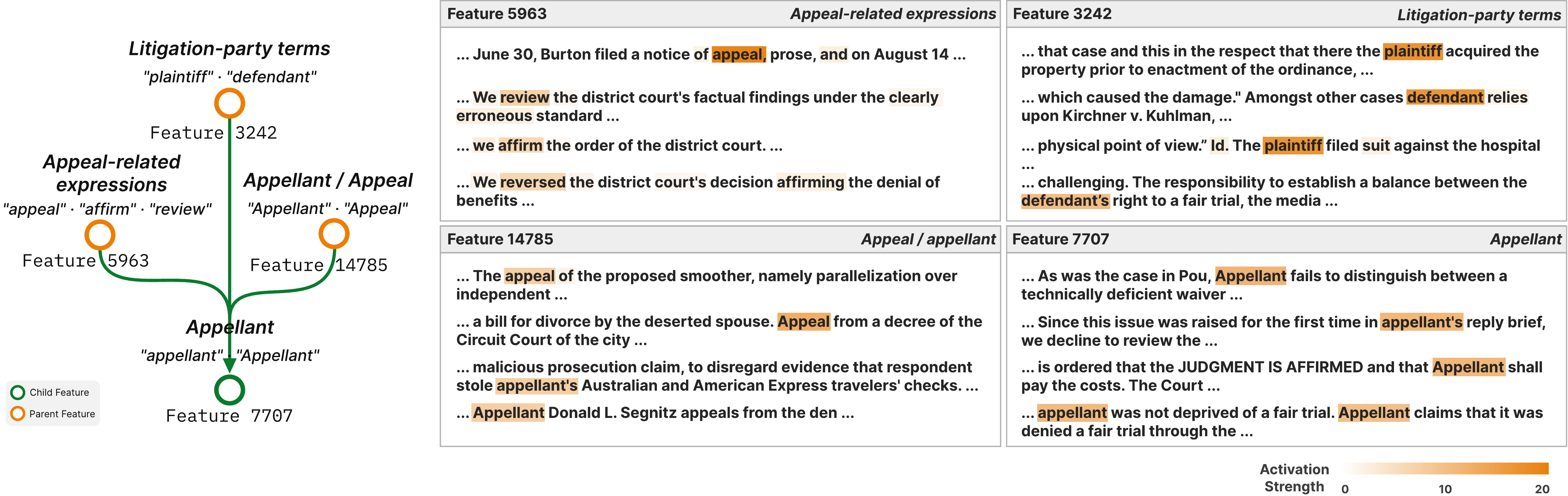}
    \vspace{-15pt}
    \caption{
    \textbf{Learned three-parent relation for appellant references.}
    \textbf{Left.} Feature 7707 and its complete parent set in final
    AG-SAE trained on MiniPile Gemma-2-2B layer-13 activations.
    Feature IDs are global across dictionary banks.
    \textbf{Right.} Activation examples selected per feature from profiling
    data and report contexts.
    Orange shading shows per-token activation on the same raw scale.
    }
    \label{fig:real-case}
    \vspace{-15pt}
\end{figure}

\begin{figure}[!t]
    \centering
    \includegraphics[width=0.9\linewidth]{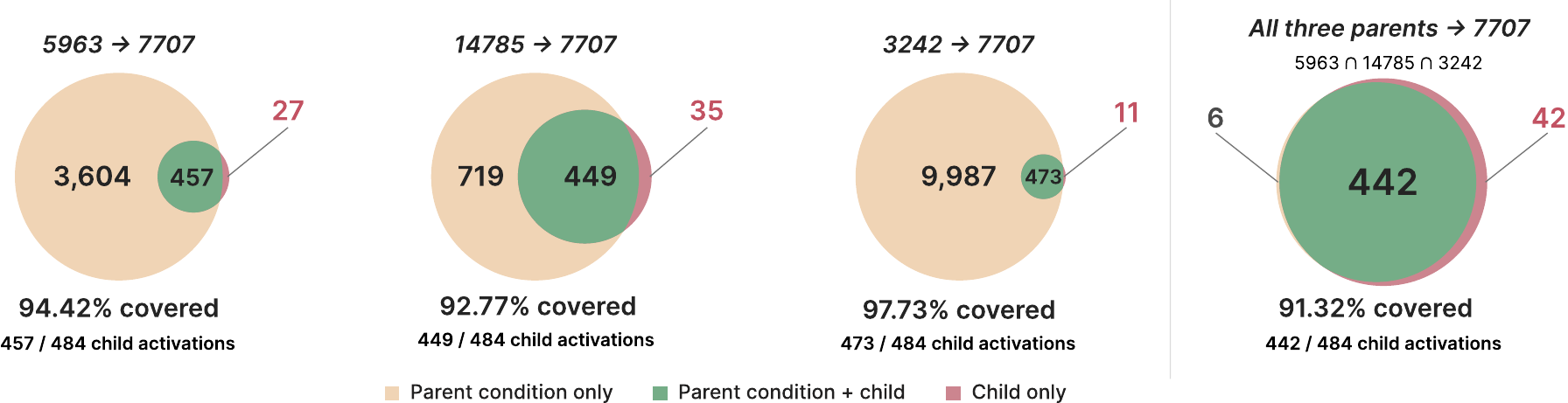}
    \vspace{-5pt}
    \caption{
    \textbf{Individual and joint parent coverage of feature 7707.}
    Counts use \(4{,}910{,}199\) report token positions with frozen
    activation thresholds. The first three panels show individual
    parents; the rightmost requires all three parents to be active.
    Orange denotes parent-only activation, green coactivation with
    the child, and pink child activation with the parent condition
    unmet. Coverage uses the \(484\) child activations as its
    denominator. Panels use separate visual scales.
    }
    \label{fig:case-coverage}
    \vspace{-15pt}
\end{figure}

\paragraph{Case Study of a Learned Multi-Parent Relation.}
To understand how the aggregate reliability results translate into
individual feature organization, we study a learned multi-parent
relation chosen for its readily interpretable legal vocabulary
(Figure~\ref{fig:real-case}). In the
Gemma-2-2B/MiniPile setting, AG-SAE assigns child \(7707\) the complete
parent set \(\{5963,14785,3242\}\). The parents activate on appellate
expressions, appeal/appellant lexical forms, and litigation-party
terms, respectively; the child's displayed activations focus on
appellant references in litigation. These parental activation scopes
overlap and extend beyond the child's, including nonlegal uses of
\emph{appeal} for feature \(14785\).

The complete parent configuration identifies the child's activation
regime with both broad coverage and high specificity
(Figure~\ref{fig:case-coverage}).
Using native SAE activations with the dictionary and thresholds frozen, all three parents are active in
\(442/484=91.32\%\) of child-active positions.
Conversely, the child is active in \(442/448=98.66\%\) of the parents' joint activations, whereas it is active in at most \(38.44\%\) of any individual parent's activations. Their joint activation therefore identifies the child's
more specific response pattern within the parents' broader scopes.

The complete parent set also provides stronger predictive support
than its proper subsets and matched alternatives. Its held-out fit to
the child's decoder contribution reaches \(0.5497\), compared with
\(0.2452\) for the best single parent, \(0.4543\) for the best
two-parent subset, and \(0.4627\) for the strongest evaluated
alternative of the same cardinality. Removing any parent and
refitting the remaining set reduces the score by at least \(0.0954\).
The child additionally contributes a normalized innovation gain
of \(\mathcal S_{\mathrm{inn}}(c,P)=0.06177\) on the \(442\) positions
where it and all parents are active.
All reconstruction coefficients are fitted on a disjoint split
and frozen for report evaluation.
Full comparisons and scoring details appear in
Appendix~\ref{app:multi_parent_case_study}.
\textbf{Together, the aggregate validation and this case study
show that AG-SAE learns reliable multi-parent structure with
interpretable joint parent support and distinct child contributions.}

\vspace{-5pt}

\subsection{AG-SAE Features Enable Stronger Causal Interventions}
\label{sec:causal-intervention}

\vspace{-5pt}

We further find that AG-SAE's structure-guided training yields features with greater causal intervention capacity concentrated in their leading features. Building on SAEBench's first-letter probing setup~\citep{karvonen2025saebench}, we use probes to identify the leading
feature for each of the 26 first-letter concepts and evaluate these features
on 416 disjoint held-out prompts. Following prior SAE intervention
studies~\citep{templeton2024scaling}, feature addition tests concept writing,
whereas feature ablation tests concept erasure. Our experiments
(Figure~\ref{fig:causal_intervention_triptych}) show that \textbf{AG-SAE's leading
feature exerts a stronger average causal effect than its Vanilla SAE
counterpart, with the improvement appearing simultaneously in both concept
writing and erasure for most tested concepts.} Crucially, this advantage is concentrated in the Top-1 feature, demonstrating that AG-SAE
consolidates causal control into a single leading feature. Matched training
counterfactuals further establish this causal advantage as a functional
consequence specific to the AG-SAE paradigm; neither conventional SAE
continuation nor decoder scaling reproduces it
(Appendix~\ref{app:causal_training_controls}).

\vspace{-5pt}

\paragraph{AG-SAE features provide stronger bidirectional causal control.}
The left panel of Figure~\ref{fig:causal_intervention_triptych} shows that
individual AG-SAE features exert substantially stronger native-scale causal
effects in both intervention directions. Mean ablation effect increases from
\(3.23\) to \(3.91\) logits, a \textbf{21.2\%} improvement over Vanilla SAE,
while mean addition effect increases from \(4.22\) to \(4.79\) logits, a
\textbf{13.6\%} improvement. The center panel of
Figure~\ref{fig:causal_intervention_triptych} shows that this advantage is
consistent across concepts. AG-SAE improves addition for \(24/26\) concepts
and ablation for \(21/26\), with \textbf{21/26} improving simultaneously in both
directions. For AG-SAE, L2-matched random directions and unrelated SAE
features produce mean target effects of at most \(0.14\) logits in magnitude
across both intervention directions, establishing the target specificity of
the measured effects. Its leading features therefore function as stronger
causal control units, more effectively writing their associated concepts when
added and erasing them when ablated.

\vspace{-6pt}

\paragraph{Causal control capability is concentrated in the leading feature.}
The right panel of Figure~\ref{fig:causal_intervention_triptych} localizes where
this increased causal capability resides. AG-SAE achieves \textbf{21.0\%} greater
causal yield per active feature at Top-1. The advantage contracts to \(5.1\%\)
at Top-2 and below \(3\%\) thereafter, showing that the gain is strongly
front-loaded into the leading feature. Moreover, for \(20/26\) concepts,
Vanilla SAE requires its two leading selected features to exceed the causal
effect of a single AG-SAE feature. AG-SAE thus exposes target-relevant causal
control through fewer feature units and provides a more compact intervention
interface to the model.

These results reveal an unexpected consequence
of AG-SAE's dictionary--graph self-consistency cycle.
Training to induce faithful feature relations also yields
stronger bidirectional causal control, with gains concentrated in
each concept's leading feature. This finding motivates further research on enhancing SAE features'
causal intervention capabilities beyond reconstruction quality.

\begin{figure}[t]
    \centering
    \includegraphics[width=\linewidth]
    {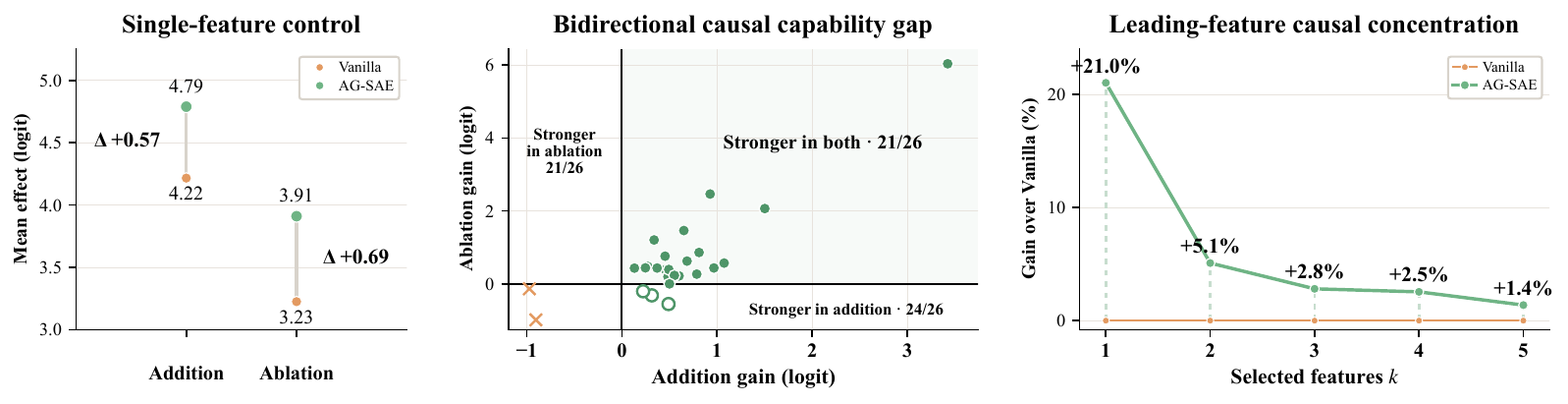}
    \vspace{-15pt}
    \caption{
    \textbf{AG-SAE consolidates bidirectional causal control into its
    leading features.}
    \textbf{Left.} Mean causal effects of adding and ablating the
    leading feature.
    \textbf{Center.} AG-SAE improves both intervention directions
    for \(21/26\) concepts.
    \textbf{Right.} The gain in causal yield per active feature is
    concentrated at Top-1 and narrows as additional features are included.
    Together, the three panels establish stronger magnitude,
    cross-concept consistency, and leading-feature concentration.
    }
    \label{fig:causal_intervention_triptych}
    \vspace{-15pt}
\end{figure}

\vspace{-7pt}

\subsection{Core Component Ablations}
\label{sec:ablation}

\vspace{-5pt}

On the same fixed SAE dictionary and candidate space, replacing
the assembly of independently selected parent--child edges with
Complete Parent-Set Induction raises multi-parent PSV from
\(34.80\%\) to \(66.13\%\) and
\(\mathrm{PSV}\cap\mathrm{NR}\) from \(31.68\%\) to \(64.25\%\),
with identical EV, \(L_0\), and dictionary capacity. Under the SAEBench Feature Absorption protocol~\citep{karvonen2025saebench,chanin2025absorption}, starting from a fixed, fully trained SAE, we perform Complete Parent-Set Induction once
and apply a single Absorption Realignment update using the resulting
graph. Absorption Fraction decreases from 42.08\% to 28.11\% and Full
Absorption from 38.92\% to 27.56\%. Evaluation settings and the inference
threshold remain fixed; EV and mean $L_0$ change by only $-0.000040$ and
$+0.0258$, respectively, with no change in the dead-feature fraction.
These isolated ablations show that complete parent sets improve structural
reliability and that realignment translates recovered structure into
reduced feature absorption. Full protocols and analysis are provided in
Appendix~\ref{app:component_ablations}.

\vspace{-10pt}

\section{Conclusion}
\vspace{-7pt}

AG-SAE extends structured SAE learning from single-parent trees to
mixed topologies allowing features with zero, one, or multiple
parents. It evaluates each candidate parent set as a complete
hypothesis against null, subset, and alternative explanations,
distinguishing jointly necessary multi-parent relations from
spurious or redundant associations. The resulting graph guides SAE training through a structural loss and further guides dictionary refinement through Absorption Realignment and
Residual Completion. Full Re-Induction then reselects every parent set
under the revised dictionary, completing the dictionary--graph
self-consistency cycle. In the controlled toy
model, AG-SAE recovers all \(24\) ground-truth parent sets and
rejects all \(32\) hard negatives. On real LLM activations, it
achieves the strongest single- and multi-parent reliability among
the evaluated methods at comparable representation quality. Its
\(69.56\%\) overall joint validity exceeds the full-space post-hoc
baseline by \(28.93\) percentage points. A concrete case further
illustrates complete parent-set support and a nonredundant child
contribution within a semantically coherent feature relation.
Beyond these structural results, AG-SAE features support stronger
concept writing and concept erasure for \(21\) of \(26\) concepts, with the gains concentrated in each
concept's leading feature. Taken together, AG-SAE not only learns reliable mixed-topology feature structure, but turns that structure from a post-hoc description into an unsupervised training signal that refines the dictionary and yields functional benefits beyond reconstruction quality.

\subsection*{AI use statement}
Generative AI tools were used to assist with literature search and summarization, code editing and debugging, feedback on experimental design and result interpretation, and language and manuscript polishing. The core research idea, methodological novelty, and scientific contributions of this work were conceived by the authors.





\bibliography{references}
\bibliographystyle{iclr2027_conference}

\clearpage
\appendix
\startcontents[appendices]

\begingroup

\setcounter{tocdepth}{2}
\setlength{\parskip}{0pt}


\titlecontents{section}
  [0em]
  {\small\bfseries\addvspace{1.05em}}
  {\contentslabel{1.8em}}
  {}
  {\hfill\contentspage}

\titlecontents{subsection}
  [1.8em]
  {\small\addvspace{0.4em}}
  {\contentslabel{2.7em}}
  {}
  {\titlerule*[0.90em]{.}\contentspage}


{\huge\bfseries Appendix\par}

\vspace{1.55em}

{\Large\bfseries Table of Contents\par}

\vspace{-0.5em}
\noindent\rule{\linewidth}{0.4pt}

\vspace{0.70em}


\noindent
\makebox[\linewidth][c]{%
  \begin{minipage}{0.88\linewidth}
    \printcontents[appendices]{}{1}{}
  \end{minipage}%
}

\vspace{0.7em}
\noindent\rule{\linewidth}{0.4pt}

\endgroup

\vspace{0.85em}

\section{Related Work}
\label{app:related_work}

\paragraph{Sparse Autoencoders and Flat Dictionaries.}
Sparse autoencoders (SAEs) decompose language-model activations into sparse combinations of latent directions drawn from an overcomplete dictionary. Compared with individual neurons, these directions often correspond to more localized and interpretable features \citep{bricken2023monosemanticity,huben2024sparse,marks2025sparse}. Gated and thresholded objectives reduce activation shrinkage and make feature selection more precise. TopK, BatchTopK, and Switch variants further improve the tradeoff between reconstruction and sparsity, distribute sparse activations more effectively, and make dictionary learning easier to scale \citep{rajamanoharan2024gated,rajamanoharan2024jumprelu,gao2025scaling,bussmann2024batchtopk,mudide2025switch,makhzani2014ksparse}. Large-scale projects have consequently released dictionaries with millions of latents across different LLM activation sites \citep{templeton2024scaling,gao2025scaling,lieberum2024gemmascope,he2024llamascope}. Standardized benchmarks evaluate interpretability, disentanglement, feature recovery, and downstream utility in addition to reconstruction and sparsity \citep{karvonen2025saebench,makelov2025principled,karvonen2024measuring,chanin2026synthsaebench}. Beyond describing model representations, SAE features also support causal circuit discovery and targeted interventions \citep{marks2025sparse,braun2024endtoend,ferrando2025know,kharlapenko2025scaling}. Together, these advances have improved how flat feature dictionaries are learned, evaluated, and used.

\paragraph{Feature Pathologies and Hierarchical Organization.}
Sparse reconstruction alone does not determine how semantic variation should be distributed among latents. This ambiguity gives rise to feature splitting, absorption, composition, correlation-induced hedging, and different noncanonical solutions across training runs \citep{bricken2023monosemanticity,chanin2025absorption,bussmann2025matryoshka,dalili2026subspace,jin2026c2r,korznikov2025ortsae,leask2025canonical,cao2026tree,chanin2025hedging,paulo2026different}. Prior work reduces feature instability and redundancy through cross-sample consistency, decoder orthogonalization, and subspace-based representations \citep{jin2026c2r,korznikov2025ortsae,dalili2026subspace}. Other approaches instead focus on organizing SAE features across semantic scales, using post-hoc coactivation analysis or hierarchical sparse coding \citep{ye2024hierarchical,li2025geometry,maas2026conditional,jenatton2011proximal,park2024hierarchicalgeometry}. Matryoshka SAE learns nested dictionary prefixes that move from broad features to finer ones. This organization reduces splitting and absorption and lowers a decoder-similarity proxy for composition \citep{bussmann2025matryoshka}. Matching-Pursuit SAE extracts conditionally orthogonal hierarchical features from successive residuals \citep{costa2025matching}. Together, these approaches organize SAE features at multiple semantic scales, but leave persistent relations between individual features unspecified.

\paragraph{Structured Feature Relations.}
Structured SAEs extend hierarchical feature organization by explicitly modeling relations between individual features. H-SAE links high-level atoms to expert SAEs, HSAE jointly learns multiple levels while maintaining a partial feature forest, and Tree SAE combines parent-gated activation coverage with cumulative reconstruction across levels \citep{muchane2025hierarchical,luo2026hsae,cao2026tree}. Their evaluations cover reconstruction, interpretability, absorption, splitting, composition, and relation reliability \citep{muchane2025hierarchical,luo2026hsae,cao2026tree}. Across the three approaches, explicit coarse-to-fine structure guides representation learning under a single-parent tree or forest prior, which allows each child to have at most one parent. However, post-hoc analyses find both single- and multi-parent associations, showing that feature organization does not uniformly follow a single tree \citep{bussmann2025matryoshka,grandien2026hierarchies}. A post-hoc DAG constructed from pairwise relations removes this topological restriction, but coactivation, redundancy, compositional overlap, and transitive ancestry can introduce spurious relations \citep{grandien2026hierarchies}. To address both the topological restriction and the risk of spurious relations, AG-SAE formulates each complete parent set as the unit of structural inference. It evaluates each candidate set against null, subset, and alternative explanations, thereby learning a mixed topology that contains features with zero, one, or multiple parents. The induced structure is then fed back into dictionary training through
a structural loss and further guides dictionary refinement. The complete
structure is subsequently re-induced from the updated dictionary. Together, this feedback loop couples validated structural inference with representation learning under a mixed topology.

\section{Additional Method Details}
\label{app:additional_method_details}

This section details Complete Parent-Set Induction $\mathcal I$,
the structural training loss $\mathcal L_{\mathcal I}$,
Mixed-Topology-Guided Dictionary Refinement $\mathcal R$,
and the re-induction and joint-stability criteria that close the
AG-SAE dictionary--graph self-consistency cycle.

\subsection{Complete Parent-Set Induction $\mathcal I$}
\label{app:complete_parent_set_induction}

\paragraph{Conditional fitting and evaluation.}
Complete Parent-Set Induction evaluates the current dictionary
$\mathcal F$ using native activations and feature contributions
$z_i(\mathbf x)\mathbf d_i$.
All main-text expectations are estimated by sample averages over their
corresponding conditioning events: representational support and complete-set
comparisons use $a_c=1$, coverage measures the fraction of these samples
with $a_P=1$, and the reconstruction errors defining innovation use
$a_ca_P=1$.

For each fixed $(c,P)$, we fit $\boldsymbol\beta$ separately for
$S=P$ and $S=P\cup\{c\}$ by nonnegative least squares, with one
coefficient per feature contribution shared across samples.
The fitted coefficients are held fixed when evaluating reconstruction
errors on separate comparison samples. Within each data role, both
reconstructions and the zero-reconstruction baseline use the same jointly
active samples. Their errors define $\mathcal S_{\mathrm{inn}}$ in
Eq.~(\ref{eq:child-innovation}), while $\mathcal S_{\mathrm{rep}}$ in
Eq.~(\ref{eq:parent-support}) uses the direct joint parent representation
$\sum_{p\in P}z_p(\mathbf x)\mathbf d_p$.
Candidates require sufficient conditioning-event counts and positive
normalizing energies. Sample requirements and data allocations are given
in Appendix~\ref{app:experimental_setup}, with separate roles for structure
selection, threshold validation, and report-only evaluation.

The population reconstruction errors compare nested function classes on
the same conditional distribution. Since the additional child coefficient
may be zero, whenever the zero-reconstruction error is positive,
\[
0\leq
\mathcal E_{c,P}(P\cup\{c\})
\leq
\mathcal E_{c,P}(P)
\leq
\mathcal E_{c,P}(\varnothing),
\qquad
0\leq\mathcal S_{\mathrm{inn}}(c,P)\leq1.
\]
If the child contribution is already a fixed nonnegative combination of
the parent contributions on this distribution, adding $c$ leaves the
reconstruction class unchanged and the innovation score is zero.

\paragraph{Complete-set comparisons.}
For each nonempty candidate parent set $P$ of child $c$, let $Q$
denote a competing parent set for the same child. We define
\begin{equation}
\label{eq:comparison-set}
\mathcal C_c(P)
=
\{Q:Q\subsetneq P\}
\cup
\{Q:|Q|=|P|,\ Q\neq P\},
\end{equation}
combining all proper subsets of $P$, including the empty set, with
equal-cardinality alternatives. All alternatives are drawn from the
same candidate search space as $P$. Every $Q\in\mathcal C_c(P)$ is evaluated on the same active-child samples as $P$, with each nonempty proper subset receiving its own
support score. For the empty set, the joint parent representation is zero, and we use
\[
a_{\varnothing}=1,
\qquad
\mathcal S_{\mathrm{rep}}(c,\varnothing)=0.
\]
Subset comparisons measure the support gained by retaining the complete
set, while equal-cardinality alternatives test its member identities.
A nonempty candidate is retained only if it satisfies the
representational-support, activation-coverage, and child-innovation
requirements and
\begin{equation}
\label{eq:parent-comparison}
\mathcal S_{\mathrm{rep}}(c,P)
\geq
\max_{Q\in\mathcal C_c(P)}
\mathcal S_{\mathrm{rep}}(c,Q)
+
\tau_{\mathrm{cmp}},
\end{equation}
where $\tau_{\mathrm{cmp}}>0$ is the required complete-set comparison
margin.

\paragraph{Acyclic selection.}
Selection starts from an empty graph with $P_c=\varnothing$ for every
child. At each step, $\mathcal I$ selects, among unassigned children,
the retained feasible candidate $(c,P)$ with the largest
$\mathcal S_{\mathrm{rep}}(c,P)$. A candidate is feasible exactly when
the current graph contains no directed path from $c$ to any $p\in P$,
so adding all edges $p\to c$ preserves acyclicity. Smaller cardinality
breaks score ties, followed by a fixed ordering.

Upon selection, all edges $p\to c$, $p\in P$, are added jointly and
$P_c=P$ is assigned. Selection stops when no feasible retained candidate
remains; unassigned children keep $P_c=\varnothing$. This preserves the
complete parent set as the unit of acceptance and yields the acyclic graph
$G$ used by subsequent structure-guided SAE training and dictionary
refinement.

\subsection{Structural Training Loss $\mathcal L_{\mathcal I}$}
\label{app:structural_training_loss}

The structural loss $\mathcal L_{\mathcal I}$ provides differentiable
training surrogates for the representational-support, complete-set-preference,
activation-coverage, and child-innovation criteria used by Complete
Parent-Set Induction $\mathcal I$. During each fixed-graph training phase,
the induced graph $G$ and comparison sets $\mathcal C_c(P_c)$ remain fixed
while these surrogates guide feature-parameter updates; Full Re-Induction
subsequently reassesses the updated dictionary using the original induction
criteria.

The fixed-graph update uses the same graph-conditioned coordinates as
Absorption Realignment. We retain $(z_i,\mathbf d_i)$ as the canonical
unit-decoder representation and use $(z_i^G,\mathbf d_i^G)$ only as
temporary training coordinates; their construction and conversion back to
the unit-decoder representation are detailed in
Appendix~\ref{app:mixed_topology_dictionary_optimization}.
Define the contribution of feature $i$ as
\[
\mathbf v_i(\mathbf x)
\coloneqq
z_i(\mathbf x)\mathbf d_i
=
z_i^G(\mathbf x)\mathbf d_i^G.
\]

For the multi-bank LLM setting, let $V_b$ denote the features in
reconstruction bank $b$ and $\mathbf b_b$ its reconstruction bias, with
$\mathbf b_b=\mathbf 0$ for a bank without a bias. The graph and complete
parent sets are global over the feature collection, so selected parents may
span banks, whereas reconstruction remains bank-specific:
\[
\widehat{\mathbf x}^{G,b}(\mathbf x)
=
\mathbf b_b
+
\sum_{i\in V_b}
z_i^G(\mathbf x)\mathbf d_i^G.
\]

\paragraph{Sparse activations and gradient evaluation.}
Conditioning events use the native sparse activity indicators and remain
fixed within each gradient update. Let
$\widetilde z_i^G(\mathbf x)\geq0$ denote the nonnegative encoder
response before sparsification in the graph-conditioned coordinates, and
$\widetilde a_i(\mathbf x)\in(0,1)$ its smooth presence gate, obtained
by applying a sigmoid to the temperature-scaled difference between the
encoder preactivation and calibrated activation threshold. The gate
temperature is specified in Appendix~\ref{app:agsae_hyperparameters}.
These smooth gates provide activation-coverage gradients, including for
parents with zero native sparse activation, without replacing the native
activity indicators used by Complete Parent-Set Induction.

\paragraph{Representational support and complete-set preference.}
For a child $c\in V_b$, define the support target
\[
\mathbf y_c(\mathbf x)
\coloneqq
\mathbf x
-
\widehat{\mathbf x}^{G,b}(\mathbf x)
+
\mathbf v_c(\mathbf x).
\]
This is the reconstruction error after removing the child's contribution
from bank $b$. During each gradient update, $\mathbf y_c$ is treated as a
fixed target and gradients flow through the parent prediction.

At the beginning of each fixed-graph phase, nonnegative prediction
coefficients
$\boldsymbol\alpha^{c,P}
=(\alpha_p^{c,P})_{p\in P}$
are fitted separately for each candidate $(c,P)$ on samples with
$a_c(\mathbf x)=1$. Each coefficient is shared across samples and held
fixed throughout the phase. The representational-support surrogate is
\[
\mathcal L_{\mathrm{rep}}(c,P)
=
\frac{
\mathbb E_{\mathbf x\sim\mathcal D}
\!\left[
\left\|
\mathbf y_c(\mathbf x)
-
\sum_{p\in P}
\alpha_p^{c,P}
\widetilde z_p^G(\mathbf x)\mathbf d_p^G
\right\|_2^2
\,\middle|\,
a_c(\mathbf x)=1
\right]
}{
\mathbb E_{\mathbf x\sim\mathcal D}
\!\left[
\left\|
\mathbf y_c(\mathbf x)
\right\|_2^2
\,\middle|\,
a_c(\mathbf x)=1
\right]
}.
\]
The prediction uses the complete set $P$, including parents from
reconstruction banks different from that of the child.

Complete-set preference compares the selected set $P_c$ with every
$Q\in\mathcal C_c(P_c)$ from Eq.~(\ref{eq:comparison-set}) using the
same support target and child-active samples:
\[
\mathcal L_{\mathrm{cmp}}(c)
=
\underset{Q\in\mathcal C_c(P_c)}{\operatorname{mean}}
\;
T\log\!\left(
1+\exp\!\left[
\frac{
m
+
\mathcal L_{\mathrm{rep}}(c,P_c)
-
\mathcal L_{\mathrm{rep}}(c,Q)
}{T}
\right]
\right),
\]
where $m>0$ is the training margin and $T>0$ controls comparison
smoothness. Each competing set uses its own fitted nonnegative prediction
coefficients; for $Q=\varnothing$, the parent prediction is zero.

\paragraph{Activation coverage.}
The coverage criterion used by $\mathcal I$ requires
$\Pr(a_{P_c}=1\mid a_c=1)\geq\tau_{\mathrm{cov}}$.
Its differentiable surrogate encourages every selected parent to be
present on child-active samples:
\[
\mathcal L_{\mathrm{cov}}(c)
=
-
\mathbb E_{\mathbf x\sim\mathcal D}
\!\left[
\sum_{p\in P_c}
\log\widetilde a_p(\mathbf x)
\,\middle|\,
a_c(\mathbf x)=1
\right].
\]
During each gradient update, the child activity indicator is a fixed
conditioning mask, while the smooth parent gates carry encoder gradients.
Full Re-Induction evaluates the original coverage criterion using native
activity indicators.

\paragraph{Child innovation.}
The innovation surrogate prevents the child-specific decoder remainder
from collapsing during graph-conditioned training. Let $\mathbf r_c$
denote this remainder and $\mathbf d_c^G$ its corresponding composite
decoder in the graph-conditioned parameterization of
Appendix~\ref{app:mixed_topology_dictionary_optimization}. We use
\[
\mathcal L_{\mathrm{inn}}(c)
=
\left[
\max\!\left\{
0,\,
\eta
-
\frac{
\|\mathbf r_c\|_2
}{
\|\mathbf d_c^G\|_2
}
\right\}
\right]^2,
\]
where $\eta>0$ is the minimum target for the relative remainder norm.
This discourages the child-specific component from vanishing into its
parent-aligned allocation. It is only a training surrogate:
the reconstruction gain $\mathcal S_{\mathrm{inn}}$ in
Eq.~(\ref{eq:child-innovation}) remains the child-innovation criterion
used by Complete Parent-Set Induction and Full Re-Induction.

\paragraph{Combined structural loss.}
Combining the four components gives
\[
\begin{aligned}
\mathcal L_{\mathcal I}(\mathcal F;G)
=
\lambda_{\mathcal I}
\underset{c:P_c\neq\varnothing}{\operatorname{mean}}
\Bigl[
&\lambda_{\mathrm{rep}}
  \mathcal L_{\mathrm{rep}}(c,P_c)
+
\lambda_{\mathrm{cmp}}
  \mathcal L_{\mathrm{cmp}}(c)
\\
&+
\lambda_{\mathrm{cov}}
  \mathcal L_{\mathrm{cov}}(c)
+
\lambda_{\mathrm{inn}}
  \mathcal L_{\mathrm{inn}}(c)
\Bigr].
\end{aligned}
\]
The nonnegative component weights control the four surrogate contributions,
and the overall structural coefficient $\lambda_{\mathcal I}$ is absorbed
into $\mathcal L_{\mathcal I}$ to match Eq.~(\ref{eq:joint-objective}).
Conditional terms are evaluated only for relations with sufficiently
observed conditioning events and positive normalizing energies
(Appendix~\ref{app:agsae_hyperparameters}); the mean is over evaluable
selected relations, with $\mathcal L_{\mathcal I}=0$ if none are available.

As in Eq.~(\ref{eq:joint-objective}), $\mathcal L_{\mathcal I}$ is combined
with the conventional SAE reconstruction objective. Within each fixed-graph
phase, the selected graph, comparison sets, and fitted prediction
coefficients remain fixed, while conditioning masks are recomputed as the
dictionary evolves. These surrogates guide parameter updates but do not
certify relational validity; Full Re-Induction reassesses the updated
dictionary using the original representational-support, complete-set
comparison, activation-coverage, and child-innovation criteria of
$\mathcal I$.

\subsection{Mixed-Topology-Guided Dictionary Refinement $\mathcal R$}
\label{app:mixed_topology_dictionary_optimization}

The refinement operator $\mathcal R$ uses the induced topology $G$ for
Absorption Realignment and Residual Completion. The structural loss for
the fixed-graph update is specified in
Appendix~\ref{app:structural_training_loss}. We retain the canonical
unit-decoder representation $(z_i,\mathbf d_i)$ and introduce temporary
graph-conditioned coordinates only during this refinement.

\paragraph{Absorption Realignment.}
With $G$ fixed, let $P_i$ denote the selected parent set of feature $i$.
Each feature is represented by a trainable decoder remainder
$\mathbf r_i$ and nonnegative parent-alignment coefficients
$\delta_{ip}$ for $p\in P_i$. Let $\mathbf d_i^G$ denote its composite
decoder direction in graph-conditioned coordinates. Evaluating features in
parent-before-child order gives
\[
\mathbf d_i^G
=
\mathbf r_i
+
\sum_{p\in P_i}
\delta_{ip}\mathbf d_p^G,
\qquad
\delta_{ip}\geq0.
\]
If $P_i=\varnothing$, the sum is empty and
$\mathbf r_i=\mathbf d_i$ at initialization. Otherwise, the
parent-alignment coefficients and $\mathbf r_i$ are initialized by the
nonnegative ridge decomposition in Eq.~(\ref{eq:allocation-fit}). This initialization is repeated at the beginning of every refinement cycle using the current native decoder directions.
Applied in parent-before-child order, this gives
$\mathbf d_i^G=\mathbf d_i$ for every feature at the start of the
fixed-graph update. This graph-conditioned factorization operationalizes the main-text realignment during the fixed-graph update. Because $\mathbf d_i^G$ depends on the shared parent coordinates $\{\mathbf d_p^G \mid p \in P_i\}$, the parent-aligned portion of feature $i$'s contribution is expressed through those coordinates, and its reconstruction gradients propagate through them. The remainder $\mathbf r_i$ parameterizes the child-specific component. The main-text identity gives an equivalent reconstruction-preserving activation view of this computation without requiring an explicit overwrite of native parent activations.

Let $z_i^G(\mathbf x)$ denote the sparse activation of feature $i$ in
graph-conditioned coordinates, initialized as
$z_i^G(\mathbf x)=z_i(\mathbf x)$. For
$\|\mathbf d_i^G\|_2>0$, the corresponding unit-decoder representation is
\begin{equation}
\label{eq:unit-decoder-coordinates}
\mathbf d_i
=
\frac{\mathbf d_i^G}{\|\mathbf d_i^G\|_2},
\qquad
z_i(\mathbf x)
=
\|\mathbf d_i^G\|_2
z_i^G(\mathbf x).
\end{equation}
Thus, the two coordinate systems preserve the same feature contribution,
\[
z_i(\mathbf x)\mathbf d_i
=
z_i^G(\mathbf x)\mathbf d_i^G,
\]
and, because the rescaling factor is positive, the activity indicator
$a_i(\mathbf x)$ is unchanged.

Using the reconstruction-bank notation introduced in
Appendix~\ref{app:structural_training_loss}, let $V_b$ denote the
features in bank $b$ and $\mathbf b_b$ its reconstruction bias. The
graph-conditioned reconstruction of bank $b$ is
\[
\widehat{\mathbf x}^{G,b}(\mathbf x)
=
\mathbf b_b
+
\sum_{i\in V_b}
z_i^G(\mathbf x)\mathbf d_i^G.
\]
The graph and complete parent sets are global, so a selected parent of a
feature in $V_b$ may belong to a different reconstruction bank, while
reconstruction remains bank-specific. Since
$z_i^G=z_i$ and $\mathbf d_i^G=\mathbf d_i$ at initialization, the
graph-conditioned parameterization preserves the current reconstruction
of every bank.

During the fixed-graph update, encoder parameters, decoder remainders,
and parent-alignment coefficients are optimized in graph-conditioned
coordinates under the reconstruction loss and the structural loss of
Appendix~\ref{app:structural_training_loss}, as summarized in
Eq.~(\ref{eq:joint-objective}). The main-text notation suppresses the
temporary graph-conditioned coordinates; $\mathcal F$ denotes the
corresponding unit-decoder feature representation.
After the fixed-graph update, the temporary coordinates are converted
back to the unit-decoder representation using
Eq.~(\ref{eq:unit-decoder-coordinates}), preserving each feature
contribution. The resulting native sparse activations and unit-norm
decoder directions are then used by Residual Completion.

\paragraph{Residual Completion.}
For a child $c\in V_b$ with selected parent set $P_c$, Residual
Completion uses the native reconstruction of the child's bank $b$ with
graph routing disabled:
\[
\widehat{\mathbf x}^{\,b}(\mathbf x)
=
\mathbf b_b
+
\sum_{i\in V_b}
z_i(\mathbf x)\mathbf d_i,
\qquad
\mathbf e_b(\mathbf x)
=
\mathbf x-\widehat{\mathbf x}^{\,b}(\mathbf x).
\]
Here, $\mathbf e_b(\mathbf x)$ is the native reconstruction residual of
bank $b$. The parent--child-conditioned residual is
\[
\mathbf e_c(\mathbf x)
=
a_c(\mathbf x)
a_{P_c}(\mathbf x)
\mathbf e_b(\mathbf x).
\]
Thus, the residual comes from the child's reconstruction bank, while the
conditioning event requires the child and all selected parents to be
active, including parents from other banks.

Leading directions of the uncentered residual second moment
\[
\mathbb E_{\mathbf x\sim\mathcal D}
\!\left[
\mathbf e_c(\mathbf x)
\mathbf e_c(\mathbf x)^\top
\right]
\]
provide candidates for the completion objective in
Eq.~(\ref{eq:residual-completion}), oriented by the reconstruction
energy captured under nonnegative projection. For a fixed unit direction
$\mathbf d$, let $u\geq0$ denote its nonnegative amplitude on a residual
sample. The optimal amplitude for the sparse rank-one completion objective
is
\[
\underset{u\geq0}{\arg\min}
\left\{
\|\mathbf e_c(\mathbf x)-u\mathbf d\|_2^2
+
\lambda_{\mathrm{sp}}u
\right\}
=
\max\!\left\{
0,\,
\mathbf d^\top\mathbf e_c(\mathbf x)
-
\frac{\lambda_{\mathrm{sp}}}{2}
\right\},
\]
where $\lambda_{\mathrm{sp}}>0$ is the sparsity coefficient from
Eq.~(\ref{eq:residual-completion}). This gives the nonnegative activation
target for a fixed residual direction; candidate validation and encoder
initialization construct a learned approximation to it.

A candidate residual component must have sufficient contextual
observations and retain reconstruction gain on separate validation
samples. The observation, gain, and fit-to-validation consistency
thresholds are specified in
Appendix~\ref{app:agsae_hyperparameters}.

Residual Completion first tests whether an existing feature in the same
reconstruction bank $V_b$ can capture the accepted residual direction.
A directionally matched feature is reused only if its native activation
and decoder contribution also satisfy the conditional reconstruction-gain
criterion on validation samples. An accepted reused feature retains its
existing identity and participates in subsequent dictionary refinement.
If no existing feature in $V_b$ passes the reuse criterion, an unused
entry in bank $b$ is initialized with the accepted residual direction and
an activation threshold calibrated to the prescribed sparsity. The
initialized entry receives a new feature identity, becomes part of
$\mathcal F^{(t+1)}$, and has no assigned parent set under the current
fixed graph $G$. Its encoder is further optimized in subsequent
fixed-graph phases using Eq.~(\ref{eq:joint-objective}). Its parent set is
not inherited from the relation that produced the residual; instead, Full
Re-Induction determines its complete parent set together with those of all
other features.

\subsection{Full Re-Induction and Joint Stability}
\label{app:full_reinduction_stability}

\paragraph{Full Re-Induction.}
After Mixed-Topology-Guided Dictionary Refinement $\mathcal R$ produces
$\mathcal F^{(t+1)}$, the temporary graph-conditioned coordinates are
converted back through Eq.~(\ref{eq:unit-decoder-coordinates}) to the
native unit-decoder representation $(z_i,\mathbf d_i)$, preserving each
feature contribution.

Full Re-Induction then reapplies $\mathcal I$ directly to
$\mathcal F^{(t+1)}$ using the updated native sparse activations
$z_i(\mathbf x)$, activity indicators $a_i(\mathbf x)$ recomputed from
them, and feature contributions $z_i(\mathbf x)\mathbf d_i$. Every
feature, including newly initialized ones, is reassessed under the
candidate search rules of
Appendix~\ref{app:complete_parent_set_induction}; candidate coefficients
and scores are recomputed from the updated feature contributions,
followed by complete-set comparisons and acyclic selection:
\[
G^{(t+1)}
=
\mathcal I\!\left(\mathcal F^{(t+1)}\right)
=
\left(P_c^{(t+1)}\right)_{c=1}^{M_{t+1}}.
\]
Existing parent sets compete under the same criteria as all other
candidates, so each feature's assignment may change with its updated
representation.

\paragraph{Feature identity and stability.}
Persistent features retain their identities across refinement cycles;
reuse preserves an existing identity, while each newly initialized
feature receives a new one. Let $\mathcal V^{(t)}$ denote the feature
identities in $\mathcal F^{(t)}$, with
$|\mathcal V^{(t)}|=M_t$, and define
\[
\mathcal U
\coloneqq
\mathcal V^{(t)}
\cup
\mathcal V^{(t+1)}
\]
as the identities present in either successive dictionary state.
Parent-set members are compared across cycles using these persistent
identities.

Structural change is the fraction of identities whose membership or
complete parent-set assignment changes:
\[
d_G\!\left(G^{(t+1)},G^{(t)}\right)
=
1-
\frac{
\left|
\left\{
c\in
\mathcal V^{(t)}\cap\mathcal V^{(t+1)}
:
P_c^{(t+1)}=P_c^{(t)}
\right\}
\right|
}{
|\mathcal U|
}.
\]
The numerator counts persistent features with unchanged complete parent
sets; newly introduced features therefore contribute to structural
change even when assigned an empty parent set. Hence $d_G=0$ exactly
when feature membership and all parent-set assignments are unchanged.

Feature change compares individual feature contributions on the same
fixed validation samples. For any vector-valued function
$\boldsymbol\phi(\mathbf x)$, define
\[
\|\boldsymbol\phi\|_{\mathrm{val}}^2
\coloneqq
\mathbb E_{\mathbf x\sim\mathcal D_{\mathrm{val}}}
\!\left[
\|\boldsymbol\phi(\mathbf x)\|_2^2
\right],
\]
where $\mathcal D_{\mathrm{val}}$ is the validation distribution used
for stability evaluation and the expectation is estimated by sample
average. The normalized feature change is
\[
d_F\!\left(
\mathcal F^{(t+1)},
\mathcal F^{(t)}
\right)
=
\left(
\frac{
\displaystyle
\sum_{i\in\mathcal U}
\left\|
z_i^{(t+1)}(\cdot)\mathbf d_i^{(t+1)}
-
z_i^{(t)}(\cdot)\mathbf d_i^{(t)}
\right\|_{\mathrm{val}}^2
}{
\displaystyle
\sum_{i\in\mathcal U}
\left(
\left\|
z_i^{(t+1)}(\cdot)\mathbf d_i^{(t+1)}
\right\|_{\mathrm{val}}^2
+
\left\|
z_i^{(t)}(\cdot)\mathbf d_i^{(t)}
\right\|_{\mathrm{val}}^2
\right)
}
\right)^{1/2}.
\]
For an identity absent from one dictionary state, its contribution is
defined as zero in that state; we also set $d_F=0$ when the denominator
is zero. Contributions are compared per feature before aggregation,
preventing cancellation across feature updates. Since $d_F$ depends on
$z_i(\mathbf x)\mathbf d_i$, it is invariant to
contribution-preserving activation--decoder rescaling.

\paragraph{Stopping and returned state.}
Joint stability is evaluated after both dictionary refinement and
Full Re-Induction have completed for the current cycle. For
$\gamma_G,\gamma_F>0$, AG-SAE terminates when
\[
d_G\!\left(G^{(t+1)},G^{(t)}\right)
\leq
\gamma_G,
\qquad
d_F\!\left(
\mathcal F^{(t+1)},
\mathcal F^{(t)}
\right)
\leq
\gamma_F.
\]
When both conditions hold, AG-SAE returns
\[
(\mathcal F^\star,G^\star)
=
(\mathcal F^{(t+1)},G^{(t+1)}),
\qquad
G^\star
=
\mathcal I(\mathcal F^\star).
\]
Thus, the returned graph is induced from the returned dictionary, and
the completed cycle satisfies both structural and feature-contribution
stability tolerances. Stopping uses validation samples separate from report-only evaluation;
their allocation and the tolerances are specified in
Appendix~\ref{app:experimental_setup}, while
Appendix~\ref{app:stability_theory} provides sufficient conditions for eventual graph stability and joint stopping.

\section{Theoretical Analysis of Graph Stability in Full Re-Induction}
\label{app:stability_theory}

We analyze the stability of the graph produced by Full Re-Induction as the dictionary evolves across refinement cycles. Small dictionary changes could in principle alter the discrete graph through changes in candidate acceptance and selection. We show that around a nondegenerate stabilized state, sufficiently small changes preserve all complete parent sets, preventing repeated structural changes from blocking the joint stopping rule.

\subsection{Setup and Assumptions}

Let $\mathcal V$ be a finite, nonempty set of feature identities, with
$M=|\mathcal V|$, and let $\theta\in\Theta\subseteq\mathbb R^q$
denote the continuous state determining the native activations, decoder
directions, and induction thresholds. Write
\[
\mathcal F_\theta
=\{(z_i(\cdot;\theta),\mathbf d_i(\theta))\}_{i\in\mathcal V},
\qquad
\mathbf v_i(\mathbf x;\theta)
=z_i(\mathbf x;\theta)\mathbf d_i(\theta),
\]
where $z_i\geq0$ and $\|\mathbf d_i\|_2=1$.
For multiple banks, $\mathcal V$ contains the features from all banks,
and parent sets may cross banks. Reconstruction follows the bankwise
objectives in Appendices~\ref{app:structural_training_loss}
and~\ref{app:mixed_topology_dictionary_optimization}.
Each refinement step returns to this native representation through the
contribution-preserving conversion in
Eq.~(\ref{eq:unit-decoder-coordinates}).

Fix a finite empirical induction configuration $\Xi$, comprising the
retrieval, fitting, independent comparison, and control-calibration data,
batch contexts, external randomness, and deterministic tie rules.
For BatchTopK, occurrences of an input in different batch contexts are
distinct sample positions. We write
$\mathcal I_\Xi(\mathcal F_\theta)$ for the induction operator with
this configuration made explicit. It recomputes sparse supports,
candidate families, fitted coefficients, calibrated thresholds, and the
graph, including dictionary-dependent control retrieval.

The empirical scores are those in
Appendix~\ref{app:complete_parent_set_induction}.
The support score $\mathcal S_{\mathrm{rep}}$ compares
$\mathbf v_c$ with the direct sum $\sum_{p\in P}\mathbf v_p$.
The innovation score $\mathcal S_{\mathrm{inn}}$ uses separate
nonnegative least-squares (NNLS) fits for $P$ and $P\cup\{c\}$,
evaluated on independent comparison samples.
Let $\operatorname{cov}(c,P;\theta)$ denote the fraction of active-child
samples on which all parents in $P$ are active. For each nonempty
candidate passing the sample-count and energy requirements, define its
acceptance margin by
\begin{equation}
\label{eq:agsae-acceptance-margin}
\begin{aligned}
m_{c,P}(\theta)=\min\bigl\{\,&
\operatorname{cov}(c,P;\theta)-\tau_{\mathrm{cov}},\\
&\mathcal S_{\mathrm{rep}}(c,P;\theta)
-\tau_{\mathrm{rep}}(c,P;\theta),\\
&\mathcal S_{\mathrm{inn}}(c,P;\theta)-\tau_{\mathrm{inn}},\\
&\mathcal S_{\mathrm{rep}}(c,P;\theta)
-\max_{Q\in\mathcal C_c(P)}\mathcal S_{\mathrm{rep}}(c,Q;\theta)
-\tau_{\mathrm{cmp}}\bigr\}.
\end{aligned}
\end{equation}
Here $\mathcal C_c(P)$ is the comparison family in
Eq.~(\ref{eq:comparison-set}), including the empty set with
$\mathcal S_{\mathrm{rep}}(c,\varnothing)=0$.
The threshold $\tau_{\mathrm{rep}}(c,P;\theta)$ is calibrated as in
Appendix~\ref{app:agsae_hyperparameters}; its arguments record the
dictionary dependence and calibration group of $(c,P)$.
Candidates in the same model, dictionary-width, and parent-set-cardinality
group share the threshold.
A candidate is retained exactly when $m_{c,P}(\theta)\geq0$.
Starting from the empty graph, acyclic selection repeatedly chooses the
feasible retained candidate with the largest support score among
unassigned children and adds its complete parent set.

\begin{agsaecondition}[Nondegenerate reference state]
\label{cond:agsae-nondegeneracy}
Let $\bar\theta\in\Theta$ be a valid native state. The reference
execution of $\mathcal I_\Xi$ at $\bar\theta$ satisfies:
\begin{enumerate}
\renewcommand{\labelenumi}{(\roman{enumi})}
\item \textbf{Discrete comparisons and branch continuity.}
The comparisons determining native supports, eligible samples,
preliminary screening, candidate pools, search spaces, and control
identities form a finite collection. After substituting earlier reference
choices, each comparison is represented by a scalar function
$h_j(\theta)$ continuous in a relative neighborhood of $\bar\theta$.
Either $h_j(\bar\theta)\neq0$, or $h_j$ is identically zero in that
neighborhood and uses a fixed tie rule. Activation amplitudes, decoder
directions, and preprocessing quantities are continuous on the reference
branch. Sparse supports are described by the comparisons generating them,
allowing inactive outputs to remain identically zero.

\item \textbf{Well-posed fitting and calibration.}
Every nonempty NNLS problem solved on the reference data branch has a
fitting matrix of full column rank. All normalizing energies used in
scoring and calibration are positive, and each calibration group has a
finite, nonempty list of valid control scores.

\item \textbf{Strict acceptance margins.}
Every candidate passing preliminary screening has
$m_{c,P}(\bar\theta)\neq0$.

\item \textbf{Strict selection gaps.}
At each step of acyclic selection, the selected candidate has a strictly
larger support score than every other feasible retained candidate.
This condition is vacuous when only one candidate is feasible.
\end{enumerate}
\end{agsaecondition}

\subsection{Local Stability of Full Re-Induction}

To show that small dictionary changes preserve the induced graph,
we first control how refitting changes the relation scores.
The following lemma establishes that the NNLS coefficients vary
continuously with their inputs, so refitting does not introduce
abrupt changes in the reconstruction errors used to assess
child innovation. This sensitivity property follows from the
theory of solution mappings for strongly monotone variational
inequalities \citep[Section~2F]{agsae_dontchev2014implicit}.
We give a direct argument with an explicit perturbation bound.

\begin{agsaelemma}[Continuity of NNLS solutions]
\label{lem:agsae-nnls-continuity}
Suppose $A(\theta)$ and $\mathbf y(\theta)$ are continuous in a
neighborhood of $\bar\theta$, and $A(\bar\theta)$ has full column
rank. Then, in a sufficiently small neighborhood, the solution
\[
\boldsymbol\beta(\theta)
=\operatorname*{arg\,min}_{\boldsymbol\beta\geq\mathbf0}
\tfrac12\|A(\theta)\boldsymbol\beta-\mathbf y(\theta)\|_2^2
\]
exists, is unique, and is continuous in $\theta$.
\end{agsaelemma}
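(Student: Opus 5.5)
We prove the lemma directly from strong convexity, which yields an explicit Lipschitz-type perturbation bound. Write $H(\theta)=A(\theta)^\top A(\theta)$ and $\mathbf g(\theta)=A(\theta)^\top\mathbf y(\theta)$, so the objective is $f_\theta(\boldsymbol\beta)=\tfrac12\boldsymbol\beta^\top H(\theta)\boldsymbol\beta-\mathbf g(\theta)^\top\boldsymbol\beta+\tfrac12\|\mathbf y(\theta)\|_2^2$.

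\textbf{Step 1: uniform strong convexity.} Full column rank of $A(\bar\theta)$ gives $\mu_0\coloneqq\lambda_{\min}(H(\bar\theta))>0$. Since $H(\theta)$ is continuous in $\theta$ and $\lambda_{\min}$ is continuous on symmetric matrices (Weyl's inequality), there is a neighborhood $U$ of $\bar\theta$ on which $\lambda_{\min}(H(\theta))\geq\mu\coloneqq\mu_0/2$. On $U$, $f_\theta$ is $\mu$-strongly convex and coercive. It is minimized over the closed convex cone $\{\boldsymbol\beta\geq\mathbf 0\}$, so a minimizer exists and is unique.

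\textbf{Step 2: variational inequality and perturbation bound.} Write $\boldsymbol\beta=\boldsymbol\beta(\theta)$ and $\boldsymbol\beta'=\boldsymbol\beta(\theta')$ for $\theta,\theta'\in U$. The first-order optimality conditions read $\langle H(\theta)\boldsymbol\beta-\mathbf g(\theta),\mathbf u-\boldsymbol\beta\rangle\geq0$ for all $\mathbf u\geq\mathbf 0$, and likewise at $\theta'$. Test the first with $\mathbf u=\boldsymbol\beta'$ and the second with $\mathbf u=\boldsymbol\beta$, then add the two inequalities. This gives
\[
\langle H(\theta)\boldsymbol\beta-H(\theta')\boldsymbol\beta'-\mathbf g(\theta)+\mathbf g(\theta'),\,\boldsymbol\beta'-\boldsymbol\beta\rangle\geq0 .
\]
Now insert $\pm H(\theta)\boldsymbol\beta'$ and use strong monotonicity of $H(\theta)$:
\[
\mu\|\boldsymbol\beta'-\boldsymbol\beta\|_2\leq\|H(\theta')-H(\theta)\|_2\,\|\boldsymbol\beta'\|_2+\|\mathbf g(\theta')-\mathbf g(\theta)\|_2 .
\]

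\textbf{Step 3: uniform boundedness and conclusion.} We need $\|\boldsymbol\beta(\theta')\|_2$ bounded on a possibly smaller neighborhood. Compare with the feasible point $\mathbf 0$: from $f_{\theta'}(\boldsymbol\beta')\leq f_{\theta'}(\mathbf 0)$ and strong convexity we get $\tfrac{\mu}{2}\|\boldsymbol\beta'\|_2^2\leq\|\mathbf g(\theta')\|_2\|\boldsymbol\beta'\|_2$. Hence $\|\boldsymbol\beta'\|_2\leq 2\|\mathbf g(\theta')\|_2/\mu$, which is bounded near $\bar\theta$ by continuity of $\mathbf g$. Substituting into the bound of Step 2, the right side tends to zero as $\theta'\to\theta$, because $H$ and $\mathbf g$ are continuous compositions of $A$ and $\mathbf y$. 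This proves continuity, in fact with an explicit modulus.

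The only delicate point is Step 1: making the strong-convexity constant uniform over a neighborhood, since the rank condition is assumed only at $\bar\theta$. The Weyl-inequality continuity argument handles it, and everything else is routine.
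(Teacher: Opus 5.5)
Your proof is correct and follows the paper's argument essentially step for step: uniform strong convexity of $A^\top A$ near $\bar\theta$, then the two first-order variational inequalities tested against each other and added, then Cauchy--Schwarz to obtain an explicit perturbation bound. The only difference is bookkeeping: the paper splits $H'\boldsymbol\beta'-H\boldsymbol\beta$ as $H'\boldsymbol\Delta+(H'-H)\boldsymbol\beta$, so its bound involves $\|\boldsymbol\beta\|_2$ at the fixed instance and needs no analogue of your Step~3, whereas your splitting leaves $\|\boldsymbol\beta'\|_2$, and your comparison with the feasible point $\mathbf 0$ correctly supplies the required uniform bound.
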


\begin{proof}
Write $H=A^\top A$ and $\mathbf w=A^\top\mathbf y$.
Full column rank and continuity give $H\succeq\mu I$ for some
$\mu>0$ throughout a sufficiently small neighborhood. The objective
is strongly convex and coercive on the closed nonnegative orthant, so
its minimizer exists and is unique.

For two nearby instances, distinguish their data and solutions by primes.
The first-order optimality conditions
\citep[Section~4.2.3]{agsae_boyd2004convex} give
\[
\begin{aligned}
\langle H\boldsymbol\beta-\mathbf w,
\boldsymbol\beta'-\boldsymbol\beta\rangle&\geq0,\\
\langle H'\boldsymbol\beta'-\mathbf w',
\boldsymbol\beta-\boldsymbol\beta'\rangle&\geq0.
\end{aligned}
\]
Adding the inequalities and setting
$\boldsymbol\Delta=\boldsymbol\beta'-\boldsymbol\beta$ yields
\[
\mu\|\boldsymbol\Delta\|_2^2
\leq
\langle\boldsymbol\Delta,
\mathbf w'-\mathbf w-(H'-H)\boldsymbol\beta\rangle.
\]
By the Cauchy--Schwarz inequality,
\begin{equation}
\label{eq:agsae-nnls-perturbation}
\|\boldsymbol\beta'-\boldsymbol\beta\|_2
\leq
\frac{\|\mathbf w'-\mathbf w\|_2
+\|H'-H\|_{\mathrm{op}}\|\boldsymbol\beta\|_2}{\mu}.
\end{equation}
The bound also holds when $\boldsymbol\Delta=\mathbf0$.
Fixing the unprimed instance and letting the primed instance approach it
proves continuity.
\end{proof}

We now show how continuous score changes lead to unchanged
parent-set decisions. Under the nondegeneracy conditions,
sufficiently small dictionary perturbations preserve the
comparisons governing candidate acceptance and acyclic
selection. Full Re-Induction therefore recovers the same
mixed topology throughout a neighborhood of the reference state.

\begin{agsaeproposition}[Local constancy of Full Re-Induction]
\label{prop:agsae-local-stability}
If Condition~\ref{cond:agsae-nondegeneracy} holds at $\bar\theta$,
there exists $r_{\mathcal I}>0$ such that
\begin{equation}
\label{eq:agsae-local-constancy}
\theta\in\Theta,\quad\|\theta-\bar\theta\|_2<r_{\mathcal I}
\quad\Longrightarrow\quad
\mathcal I_\Xi(\mathcal F_\theta)
=\mathcal I_\Xi(\mathcal F_{\bar\theta}).
\end{equation}
Thus every complete parent set and the global DAG remain unchanged,
including features with zero, one, or multiple parents.
\end{agsaeproposition}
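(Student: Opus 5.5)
The plan is to show that the induction operator $\mathcal I_\Xi$ is a finite composition of discrete decisions, each a sign test on a continuous scalar function of $\theta$. By Condition~\ref{cond:agsae-nondegeneracy}, each such test is strict at $\bar\theta$ or identically tied. Every decision is therefore locally constant, and hence so is the output graph. Concretely, I will follow the reference execution of $\mathcal I_\Xi$ at $\bar\theta$ stage by stage and produce, at each stage, a radius on which that stage's discrete outcome matches the reference. Since there are finitely many stages, the minimum of these radii is the desired $r_{\mathcal I}>0$.

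\textbf{Step 1 (discrete preprocessing).} By Condition~\ref{cond:agsae-nondegeneracy}(i), the comparisons fixing native supports, eligible samples, screening, candidate pools and control identities form a finite family $\{h_j\}$. Each $h_j$ is continuous on the reference branch. For those with $h_j(\bar\theta)\neq0$, continuity gives a radius on which the sign is preserved. The identically zero ones are resolved by the fixed tie rule, so their outcome is constant. I will argue inductively in execution order, since later comparisons are defined only after substituting earlier choices. Once every earlier $h_j$ keeps its sign, the later comparisons lie on the same branch and the same argument applies. On the resulting neighborhood, all sparse supports, conditioning events, candidate families $\mathcal C_c(P)$, and sample sets coincide with the reference ones. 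Consequently the coverage fractions $\operatorname{cov}(c,P;\theta)$, which are ratios of counts over fixed index sets, are exactly constant.

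\textbf{Step 2 (continuity of scores and thresholds).} With index sets frozen, $\mathcal S_{\mathrm{rep}}(c,P;\theta)$ is a ratio of finite sample averages of continuous contributions $\mathbf v_i(\mathbf x;\theta)$. Its denominator is positive at $\bar\theta$ by Condition~\ref{cond:agsae-nondegeneracy}(ii), so $\mathcal S_{\mathrm{rep}}$ is continuous. For $\mathcal S_{\mathrm{inn}}$, the NNLS design matrices for $S=P$ and $S=P\cup\{c\}$ have full column rank at $\bar\theta$. Lemma~\ref{lem:agsae-nnls-continuity} then gives continuous coefficients $\boldsymbol\beta(\theta)$. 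Evaluating the errors on the fixed comparison samples and dividing by the positive $\mathcal E_{c,P}(\varnothing)$ yields a continuous $\mathcal S_{\mathrm{inn}}$. For the calibrated thresholds $\tau_{\mathrm{rep}}(c,P;\theta)$, each calibration group has a finite nonempty list of control scores, and these are continuous by the same argument. The calibration map (e.g.\ an order statistic or quantile) is a continuous function of finitely many scores, so the thresholds are continuous too. This is the step I expect to be the main obstacle. I must check that the specific calibration rule in Appendix~\ref{app:agsae_hyperparameters} is continuous; order statistics are, but rank-based selections of control identities are not. Any such discontinuous selection must therefore be shown to be among the comparisons $h_j$ already frozen in Step 1. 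Given all of the above, every margin $m_{c,P}(\theta)$ in Eq.~(\ref{eq:agsae-acceptance-margin}) is a minimum of finitely many continuous functions, hence continuous.

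\textbf{Step 3 (acceptance and selection).} By Condition~\ref{cond:agsae-nondegeneracy}(iii), $m_{c,P}(\bar\theta)\neq0$ for the finitely many screened candidates. Shrinking the radius therefore preserves every sign of $m_{c,P}$, so the retained set is unchanged. Acyclic selection is then an induction on the step index. Assume the first $k$ selections agree with the reference. Then the current graph, the set of unassigned children, and the feasibility of each candidate coincide, because feasibility depends only on the graph. By Condition~\ref{cond:agsae-nondegeneracy}(iv), the reference winner has a strict support-score gap over the other feasible retained candidates. Continuity of $\mathcal S_{\mathrm{rep}}$ preserves this gap after a further finite shrinking, so the same candidate is selected. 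If no feasible retained candidate remains at the reference, none exists at $\theta$ either, so termination occurs at the same step. The selection runs at most $M$ steps, so finitely many radii suffice. Taking their minimum gives $r_{\mathcal I}$ and establishes Eq.~(\ref{eq:agsae-local-constancy}), including identical empty, single and multi-parent assignments.
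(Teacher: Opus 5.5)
Your proposal is correct and follows essentially the same route as the paper's proof. You freeze the discrete execution branch via Condition~\ref{cond:agsae-nondegeneracy}(i), obtain continuity of the refitted scores from Lemma~\ref{lem:agsae-nnls-continuity}, and obtain continuity of the calibrated thresholds through order statistics and quantiles; then you preserve acceptance signs and run the selection induction over at most $M$ steps using the strict gaps. The paper resolves the obstacle you flagged in the same way you anticipated: control identities are among the comparisons frozen in Step~1, the quantile uses fixed interpolation weights, and the margin and floor enter only through addition and a maximum.
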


\begin{proof}
We preserve the discrete execution branch, establish continuity of the
scores and thresholds, and then use the strict margins to preserve
candidate acceptance and acyclic selection.

\smallskip
\noindent\textbf{Step 1: Discrete execution branch.}
Consider the comparisons in
Condition~\ref{cond:agsae-nondegeneracy}(i).
For each $h_j(\bar\theta)\neq0$, continuity gives a neighborhood on
which
\[
|h_j(\theta)-h_j(\bar\theta)|
<\tfrac12|h_j(\bar\theta)|.
\]
Intersect these finitely many neighborhoods with those preserving the
identically zero comparisons. If an execution first departed from the
reference choices, all preceding choices would agree, so the departing
comparison would use the same reference expression $h_j$.
Its sign or fixed tie decision is unchanged, a contradiction.
Hence the supports, conditional samples, preliminary screening,
candidate families, and control identities agree with the reference
execution. Sample counts and coverage fractions are therefore constant.

\smallskip
\noindent\textbf{Step 2: Refitted scores.}
On this data branch, the feature contributions are continuous, so the
NNLS matrices and targets are continuous.
Condition~\ref{cond:agsae-nondegeneracy}(ii) and
Lemma~\ref{lem:agsae-nnls-continuity} give continuous fitted
coefficients. For each refitted reconstruction, let
$n_{\mathrm{cmp}}>0$ be the number of comparison samples.
Let $A_{\mathrm{cmp}}(\theta)$ and
$\mathbf y_{\mathrm{cmp}}(\theta)$ denote the corresponding
design matrix and target vector, respectively, obtained by
stacking feature contributions and reconstruction targets
across these samples. The coefficient vector
$\boldsymbol\beta(\theta)$ is estimated on the separate fitting
samples. Each error evaluated on the fixed comparison samples
has the continuous form
\[
\frac{1}{n_{\mathrm{cmp}}}
\|A_{\mathrm{cmp}}(\theta)\boldsymbol\beta(\theta)
-\mathbf y_{\mathrm{cmp}}(\theta)\|_2^2,
\qquad n_{\mathrm{cmp}}>0.
\]
The innovation score is a difference of these errors divided by a
positive energy, and is therefore continuous. The direct support score
is likewise continuous, since it consists of finite averages of
continuous contributions with a positive denominator.

\smallskip
\noindent\textbf{Step 3: Calibration and acceptance.}
Let $s_1(\theta),\ldots,s_K(\theta)$ be the continuous scores of a
fixed control list. Their $k$th order statistic satisfies
\[
|s_{(k)}(\theta)-s_{(k)}(\bar\theta)|
\leq\max_{1\leq j\leq K}|s_j(\theta)-s_j(\bar\theta)|.
\]
Indeed, perturbing every entry by at most $\delta$ changes each ordered
entry by at most $\delta$. Standard sample quantiles select or
interpolate order statistics \citep{agsae_hyndman1996quantiles}.
For a fixed list length, quantile level, and rule, the interpolation
weights are fixed, so the quantile is continuous. Adding the calibration
margin and taking the maximum with the floor preserve continuity;
hence $\tau_{\mathrm{rep}}$ is continuous.
Finite maxima and minima then give continuity of the acceptance margin
in Eq.~(\ref{eq:agsae-acceptance-margin}).
By Condition~\ref{cond:agsae-nondegeneracy}(iii), a smaller
neighborhood preserves the signs of all acceptance margins and thus
the retained candidate set.

\smallskip
\noindent\textbf{Step 4: Acyclic selection.}
Both executions start from the empty graph. If their first $k$ choices
agree, their current graphs and unassigned children agree.
Reachability from each child to its candidate parents is identical,
so the feasible candidates are the same. The reference winner's score
gap over each competitor is continuous and strictly positive by
Condition~\ref{cond:agsae-nondegeneracy}(iv). It therefore remains
the winner in a smaller neighborhood.
Induction over at most $M$ selections gives the same complete parent
sets and the same termination point; all remaining children keep empty
parent sets. Only finitely many neighborhood restrictions are required,
yielding a common positive radius $r_{\mathcal I}$.
\end{proof}

\paragraph{Decision margins and the stability radius.}
Let $\mathcal H$ collect the nonzero reference comparisons, acceptance
margins, and selection score gaps whose signs are preserved in the
proof. Suppose these functions have Lipschitz constants $L_h$ on a
relative ball of radius $r_0>0$ that also preserves full rank, positive
energies, and persistent ties. Then one may take
\begin{equation}
\label{eq:agsae-stability-radius}
r_{\mathcal I}
=\min\left\{r_0,\;
\min_{h\in\mathcal H:\,L_h>0}
\frac{|h(\bar\theta)|}{2L_h}\right\},
\end{equation}
where the inner minimum is $+\infty$ when its index set is empty.
The admissible perturbation is thus controlled by the decision margins
relative to their sensitivity to the dictionary state.

\subsection{Eventual Graph Stability and Joint Stopping}

Local stability provides the graph-side guarantee needed for joint stopping.
We next consider a refinement sequence whose dictionary state stabilizes to
a nondegenerate limit. Once the dictionary states remain within the
corresponding stability neighborhood, Full Re-Induction returns the same
graph. Under the continuity conditions below, feature-contribution changes
also vanish, ensuring that the joint stopping rule in
Appendix~\ref{app:full_reinduction_stability} is eventually satisfied.
Fix nonempty validation samples $X_{\mathrm{val}}$ with fixed batch
contexts, separate from report-only evaluation. The sample-average
validation norm is
\[
\|\mathbf u\|_{\mathrm{val}}^2
\coloneqq\frac{1}{|X_{\mathrm{val}}|}
\sum_{\mathbf x\in X_{\mathrm{val}}}\|\mathbf u(\mathbf x)\|_2^2.
\]
Empirically, the refinement trajectories in
Figure~\ref{fig:graph_refinement_dynamics} exhibit a stabilization trend
consistent with the regime considered below. After $G^{(2)}$, both $d_F$
and $d_G$ decrease monotonically.

\begin{agsaecorollary}[Eventual stability and finite joint stopping]
\label{cor:agsae-joint-stopping}
Consider a continued sequence of the stated refinement updates, with
$\mathcal F^{(t)}=\mathcal F_{\theta_t}$ and
$G^{(t)}=\mathcal I_\Xi(\mathcal F_{\theta_t})$ under the same
configuration $\Xi$ at every cycle. Suppose that:
\begin{enumerate}
\renewcommand{\labelenumi}{(\roman{enumi})}
\item After some finite $t_0$, the feature identity set is the same
finite, nonempty $\mathcal V$. New identities may be introduced before
$t_0$.
\item The states satisfy $\theta_t\to\theta_\infty\in\Theta$, and
Condition~\ref{cond:agsae-nondegeneracy} holds at $\theta_\infty$.
\item Every validation contribution $\mathbf v_i(\mathbf x;\theta)$
is continuous at $\theta_\infty$, and
\[
E_{\mathrm{val}}^\infty
\coloneqq\sum_{i\in\mathcal V}
\|\mathbf v_i(\cdot;\theta_\infty)\|_{\mathrm{val}}^2>0.
\]
\end{enumerate}
Let $G^\infty=\mathcal I_\Xi(\mathcal F_{\theta_\infty})$.
There exists a finite $T_G\geq t_0$ such that
\[
\begin{aligned}
G^{(t)}&=G^\infty,\qquad
d_G(G^{(t+1)},G^{(t)})=0\quad(t\geq T_G),\\
d_F(\mathcal F_{\theta_{t+1}},\mathcal F_{\theta_t})&\longrightarrow0.
\end{aligned}
\]
Consequently, for any fixed $\gamma_G,\gamma_F>0$, there exists a
finite $T_\gamma$ such that both joint-stability inequalities hold
for every $t\geq T_\gamma$.
\end{agsaecorollary}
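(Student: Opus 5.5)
The plan is to combine Proposition~\ref{prop:agsae-local-stability}, applied at the limit state $\theta_\infty$, with a direct continuity estimate for $d_F$.

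\textbf{Graph part.} By assumption (ii), Condition~\ref{cond:agsae-nondegeneracy} holds at $\theta_\infty$. Proposition~\ref{prop:agsae-local-stability} then gives a radius $r_{\mathcal I}>0$ such that $\mathcal I_\Xi(\mathcal F_\theta)=G^\infty$ for every $\theta\in\Theta$ with $\|\theta-\theta_\infty\|_2<r_{\mathcal I}$. Since $\theta_t\to\theta_\infty$, there is $t_1$ with $\|\theta_t-\theta_\infty\|_2<r_{\mathcal I}$ for all $t\geq t_1$. Set $T_G=\max\{t_0,t_1\}$.

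For $t\geq T_G$ we get $G^{(t)}=\mathcal I_\Xi(\mathcal F_{\theta_t})=G^\infty$, because the configuration $\Xi$ is the same at every cycle. Moreover, by (i) the identity sets satisfy $\mathcal V^{(t)}=\mathcal V^{(t+1)}=\mathcal V$, so $\mathcal U=\mathcal V$. Every $c\in\mathcal V$ then has $P_c^{(t+1)}=P_c^{(t)}$, and the formula for $d_G$ gives $1-|\mathcal V|/|\mathcal V|=0$.

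The one subtlety here is that the graph is indexed by identities. I would note that $\mathcal I_\Xi$ is defined over the fixed identity set $\mathcal V$, so that equal outputs mean equal parent sets per identity. This is the step I expect to require the most care, since it depends on identities persisting after $t_0$.

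\textbf{Feature part.} For $t\geq t_0$, write $\mathbf v_i^{t}=\mathbf v_i(\cdot;\theta_t)$; both states share $\mathcal U=\mathcal V$. By (iii) and the finiteness of $X_{\mathrm{val}}$ and $\mathcal V$, we have $\mathbf v_i^t(\mathbf x)\to\mathbf v_i(\mathbf x;\theta_\infty)$ for each validation sample. Hence $\|\mathbf v_i^{t}-\mathbf v_i(\cdot;\theta_\infty)\|_{\mathrm{val}}\to0$, and by the triangle inequality the numerator satisfies
\[
\sum_{i\in\mathcal V}\|\mathbf v_i^{t+1}-\mathbf v_i^{t}\|_{\mathrm{val}}^2\to0 .
\]
Meanwhile the denominator tends to $2E_{\mathrm{val}}^\infty>0$. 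It is therefore eventually bounded below by $E_{\mathrm{val}}^\infty$, so the zero-denominator convention never applies. The ratio tends to $0$, and so does its square root, giving $d_F\to0$.

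\textbf{Joint stopping.} Given $\gamma_F>0$, choose $t_2$ such that $d_F\leq\gamma_F$ for all $t\geq t_2$. Take $T_\gamma=\max\{T_G,t_2\}$. For every $t\geq T_\gamma$ we then have $d_G=0\leq\gamma_G$ and $d_F\leq\gamma_F$, so both joint-stability inequalities hold.
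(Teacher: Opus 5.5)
Your proposal is correct and follows essentially the same route as the paper. Like the paper, you apply Proposition~\ref{prop:agsae-local-stability} at $\theta_\infty$ to freeze the graph for all late cycles with $T_G\geq t_0$, and you use continuity on the finite validation set together with the denominator tending to $2E_{\mathrm{val}}^\infty>0$ to get $d_F\to0$. The only difference is bookkeeping: you work feature by feature, while the paper stacks all validation contributions into a single vector $W(\theta)$ and rewrites $d_G$ and $d_F$ on the fixed-identity tail.
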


\begin{proof}
Apply Proposition~\ref{prop:agsae-local-stability} at
$\theta_\infty$. Since $\theta_t\to\theta_\infty$, all sufficiently
late states lie in its stability neighborhood. Thus
$G^{(t)}=G^\infty$ for every $t\geq T_G$, for some finite
$T_G\geq t_0$.

Let $W(\theta)$ concatenate all validation contributions in a fixed
feature and sample order, scaled by $|X_{\mathrm{val}}|^{-1/2}$.
On the fixed-identity tail, the metrics in
Appendix~\ref{app:full_reinduction_stability} are exactly
\begin{equation}
\label{eq:agsae-tail-metrics}
\begin{aligned}
d_G(G^{(t+1)},G^{(t)})
&=\frac1M\sum_{c\in\mathcal V}
\mathbf1\{P_c^{(t+1)}\neq P_c^{(t)}\},\\
d_F(\mathcal F_{\theta_{t+1}},\mathcal F_{\theta_t})
&=\frac{\|W(\theta_{t+1})-W(\theta_t)\|_2}
{\bigl(\|W(\theta_{t+1})\|_2^2+\|W(\theta_t)\|_2^2\bigr)^{1/2}}.
\end{aligned}
\end{equation}
As in the original definition, $d_F=0$ when its denominator is zero.
Eventual equality of the graphs gives $d_G=0$.
Continuity of the finitely many validation contributions gives
$W(\theta_t)\to W(\theta_\infty)$, and hence
\[
\begin{aligned}
\|W(\theta_{t+1})-W(\theta_t)\|_2
&\leq\|W(\theta_{t+1})-W(\theta_\infty)\|_2\\
&\quad+\|W(\theta_t)-W(\theta_\infty)\|_2
\longrightarrow0.
\end{aligned}
\]
The squared denominator in Eq.~(\ref{eq:agsae-tail-metrics}) tends
to $2E_{\mathrm{val}}^\infty>0$, proving $d_F\to0$.
Choose $T_F$ such that $d_F\leq\gamma_F$ for all $t\geq T_F$, and set
$T_\gamma=\max\{T_G,T_F\}$. Both stopping inequalities then hold
for every $t\geq T_\gamma$. Checking them after each completed cycle
therefore triggers joint stopping no later than cycle $T_\gamma$
along this continuation.
\end{proof}

Corollary~\ref{cor:agsae-joint-stopping} gives sufficient conditions
for reaching a stopping cycle. At any stopping cycle $t_s$, the returned
dictionary--graph pair satisfies
\[
(\mathcal F^\star,G^\star)
=(\mathcal F_{\theta_{t_s+1}},G^{(t_s+1)}),
\qquad
G^\star=\mathcal I_\Xi(\mathcal F^\star).
\]
Thus the returned graph is induced from the returned dictionary.
Positive tolerances allow the joint stopping criterion to be met before
the graph becomes exactly constant.

Together, these theoretical results show that, under the stated conditions,
once the dictionary stabilizes around a nondegenerate state, Full Re-Induction
eventually preserves all complete parent sets. The graph therefore also
stabilizes rather than repeatedly changing structure, allowing the joint
stopping criterion to be satisfied.

\section{Additional Experimental Results}
\label{app:additional_results}

\subsection{Additional Analysis of the Refinement Process and Learned Mixed Topology}
\label{app:graph_coverage_validity}

On Gemma-2-2B layer-13 activations from MiniPile, we analyze
how the dictionary--graph pair evolves during refinement and
how the resulting feature relations are organized across
dictionary scales and parent-set arities.

\begin{figure}[H]
    \centering
    \includegraphics[width=0.315\linewidth]
        {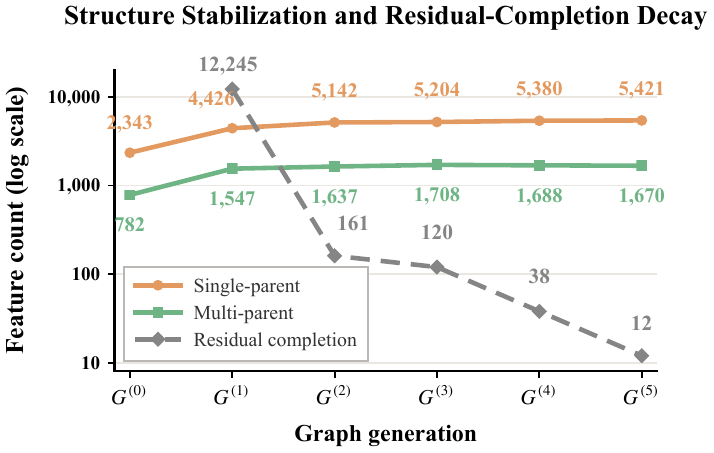}
    \hfill
    \includegraphics[width=0.315\linewidth]
        {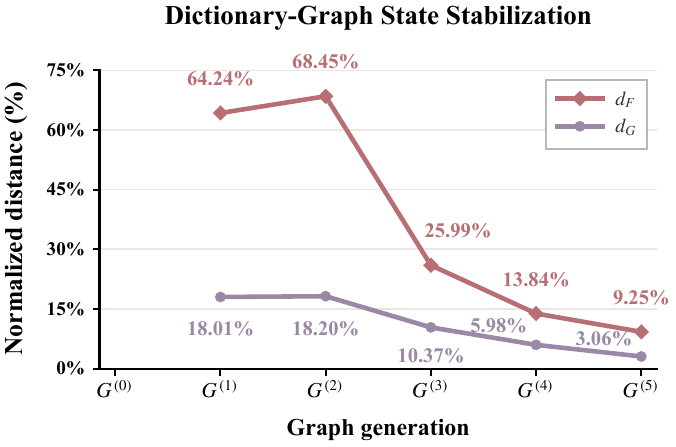}
    \hfill
    \includegraphics[width=0.315\linewidth]
        {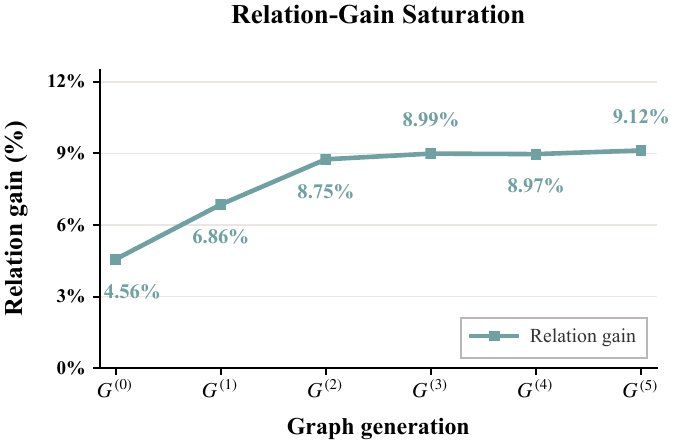}
    \caption{
    \textbf{AG-SAE refinement dynamics and joint stability.}
    Results use Gemma-2-2B layer-13 activations from MiniPile.
    \textbf{Left.} Counts of single- and multi-parent features across
    graph generations, together with accepted Residual Completion
    updates between successive refinement cycles.
    \textbf{Middle.} Normalized feature-contribution change \(d_F\)
    on held-out activations and graph change \(d_G\) in feature
    membership and complete parent-set assignments between successive
    refinement cycles. Both decrease after \(G^{(2)}\), reaching
    \(9.25\%\) and \(3.06\%\), respectively, in the final transition.
    \textbf{Right.} Relation gain is the held-out fraction of residual representation explained by parents with the child withheld, averaged over all selected relations, increasing from \(4.56\%\) at \(G^{(0)}\) to \(9.12\%\) at \(G^{(5)}\).
    }
    \label{fig:graph_refinement_dynamics}
\end{figure}

\paragraph{Refinement Across Cycles.}
The state snapshots in Panel C of
Table~\ref{tab:ag-sae-combined} summarize the refinement statistics at
representative graph generations, with Absorption Realignment counting
the relations undergoing realignment during the corresponding refinement
cycle. The complete trajectories in
Figure~\ref{fig:graph_refinement_dynamics} show that AG-SAE
first forms most of its mixed topology and then progressively
consolidates the dictionary--graph pair. By \(G^{(2)}\), the
single- and multi-parent populations are already close to their
final levels. The multi-parent population peaks at \(1{,}708\)
in \(G^{(3)}\) and then declines slightly to \(1{,}670\) by
\(G^{(5)}\), showing that AG-SAE consolidates its learned topology
rather than relying on unrestricted structural expansion. The left
panel of Figure~\ref{fig:graph_refinement_dynamics} shows that
accepted Residual Completion updates fall from \(12{,}245\) in the
first transition to \(12\) in the final transition. This decline
indicates that progressively fewer completion updates are accepted
as refinement proceeds, consistent with Residual Completion
addressing representational gaps exposed by the induced structure
and the remaining demand for such corrections diminishing. The
middle panel shows that feature-contribution change \(d_F\)
increases from \(64.24\%\) to \(68.45\%\) across the first two
transitions, coinciding with the period of greatest topology growth
and Residual Completion activity. After \(G^{(2)}\), \(d_F\)
decreases monotonically to \(9.25\%\). This non-monotonic trajectory is consistent with an early dictionary-reorganization phase followed by the later consolidation
of feature contributions. The same panel shows that
the exact parent-set change fraction \(d_G\) falls from approximately
\(18\%\) in the first two transitions to \(3.06\%\) in the last,
indicating that Full Re-Induction increasingly retains the parent
sets selected under the preceding dictionary. Meanwhile, the right
panel shows that mean held-out relation gain doubles from
\(4.56\%\) at \(G^{(0)}\) to \(9.12\%\) at \(G^{(5)}\), remaining
near \(9\%\) after \(G^{(2)}\). Thus, the diminishing demand for
completion accompanies stronger relational explanation, while
reduced feature-contribution change, reduced parent-set
reassignment, and the plateau in relation gain indicate that the
dictionary--graph pair is approaching a stable configuration over
the observed cycles. The final \(d_F\) and \(d_G\) values fall below
their prescribed tolerances, establishing operational joint
stability. Together, these trajectories support AG-SAE's central
training premise that induced structure guides targeted dictionary
refinement and the revised dictionary yields increasingly stable
and better-supported relations.

\begin{table}[!t]
\centering
\caption{
\textbf{Multi-parent topology spans all four dictionary banks, and Joint
validity changes by only \(0.75\) percentage points between two- and
three-parent sets.}
\textbf{A.} Cov.\ \% is the number of parented features divided by bank width,
and Multi \% is the number of multi-parent features divided by the number of
parented features.
\textbf{B.} Structures denotes the total number of recovered relations in each
arity cohort. Panel B applies the frozen report-only protocols of
Table~\ref{tab:real-llm-validation} to fixed arity-specific relation cohorts.
PSV, NR, and SV are evaluated on the same relations, while Joint reports the
percentage satisfying all three criteria. All rates are reported as percentages.
}
\label{tab:additional_graph_coverage_validity}

\begingroup
\footnotesize
\setlength{\tabcolsep}{4.5pt}
\renewcommand{\arraystretch}{1.03}
\setlength{\aboverulesep}{0.5pt}
\setlength{\belowrulesep}{0.5pt}

\begin{tabular*}{0.68\linewidth}{
    @{\extracolsep{\fill}}
    l
    rrrrr
    @{}
}
\toprule
\multicolumn{6}{c}{\textbf{A. Coverage by child width}} \\
\midrule
Width & Parented & Cov.\ \% & Single & Multi & Multi \% \\
\midrule
\(2{,}048\)  & 422   & 20.61 & 218   & 204 & 48.34 \\
\(4{,}096\)  & 1,197 & 29.22 & 847   & 350 & 29.24 \\
\(8{,}192\)  & 2,118 & 25.85 & 1,636 & 482 & 22.76 \\
\(16{,}384\) & 3,354 & 20.47 & 2,720 & 634 & 18.90 \\

\midrule
\multicolumn{6}{c}{\textbf{B. Validity by multi-parent arity}} \\
\midrule
Arity & Structures & PSV & NR & SV & Joint \\
\midrule
2 & 1,239 & 98.75 & 95.25 & 70.75 & 65.50 \\
3 & 431   & 95.25 & 88.50 & 79.25 & 64.75 \\
\bottomrule
\end{tabular*}

\endgroup
\end{table}

\paragraph{Final Topology Across Dictionary Scales and Parent-Set Arity.}
The resulting graph contains multi-parent relations across all four dictionary banks, comprising \(23.55\%\) of parented features overall (Table~\ref{tab:ag-sae-combined}). Graph roles separate systematically by scale, while relational depth extends beyond the four bank levels. Joint validity differs by only \(0.75\) percentage points between two- and three-parent sets. Panel A of Table~\ref{tab:additional_graph_coverage_validity} decomposes the \(23.55\%\) multi-parent share reported in Table~\ref{tab:ag-sae-combined} across the four dictionary banks. Among the \(7{,}091\) parented features, \(1{,}670\) (\(23.55\%\)) have multiple parents. Every bank contains between \(204\) and \(634\) multi-parent children, which constitute \(18.90\%\) to \(48.34\%\) of its parented features. Parented features likewise cover \(20.47\%\) to \(29.22\%\) of each bank across the eightfold range of dictionary widths. The within-bank multi-parent share increases monotonically as dictionary width decreases, reaching \(48.34\%\) in the \(2{,}048\)-feature bank.

Graph roles separate systematically with dictionary scale. The left panel of Figure~\ref{fig:graph_role_and_depth} shows that roots fall monotonically from \(27.29\%\) of the \(2{,}048\)-feature bank to \(5.94\%\) of the \(16{,}384\)-feature bank, while leaves rise monotonically from \(6.49\%\) to \(18.85\%\). Roots outnumber leaves by \(4.20\) to \(1\) in the narrowest bank, whereas leaves outnumber roots by \(3.17\) to \(1\) in the widest. The fraction of features that serve as a parent for at least one child follows the same ordering, decreasing from \(41.41\%\) in the narrowest bank to \(7.56\%\) in the widest. The induction procedure permits edges in either scale direction and within a bank, making this monotonic role separation an empirical property of the recovered relations. The resulting gradient is consistent with the increase in feature granularity with dictionary width documented in prior work \citep{bricken2023monosemanticity,bussmann2025matryoshka}. Narrow banks disproportionately contain explanatory roots, whereas wide banks disproportionately contain terminal refinements. Graph depth exceeds the limit of a strict four-level scale hierarchy. If every edge followed increasing dictionary width, any directed path beginning at a root could contain at most three edges. The right panel of Figure~\ref{fig:graph_role_and_depth} places \(394\) features beyond this bound and reaches a maximum depth of eight. The graph also contains \(2{,}381\) interior features with both a parent set and at least one child. Directed edges move from a narrower to a wider bank in \(71.80\%\) of cases, from a wider to a narrower bank in \(27.87\%\), and within the same bank in the remaining \(0.33\%\). These edge directions and paths extend the recovered organization beyond a four-level progression through dictionary scale.

The final graph contains \(1{,}239\) two-parent sets and \(431\) three-parent sets. Panel B of Table~\ref{tab:additional_graph_coverage_validity} reports arity-specific validity under the frozen report-only protocols of Table~\ref{tab:real-llm-validation}. Two-parent sets reach \(98.75\%\) PSV, \(95.25\%\) NR, and \(70.75\%\) SV, with a Joint rate of \(65.50\%\). Three-parent sets reach \(95.25\%\), \(88.50\%\), and \(79.25\%\) on the same metrics, with a Joint rate of \(64.75\%\). The two Joint rates remain nearly identical, differing by only \(0.75\) percentage points. The pooled Joint rate in Table~\ref{tab:real-llm-validation} is higher because it also includes the single-parent cohort. At both evaluated multi-parent arities, approximately two thirds of the sampled relations satisfy all three validity criteria.

\begin{figure}[!t]
    \centering
    \includegraphics[width=0.9\linewidth]{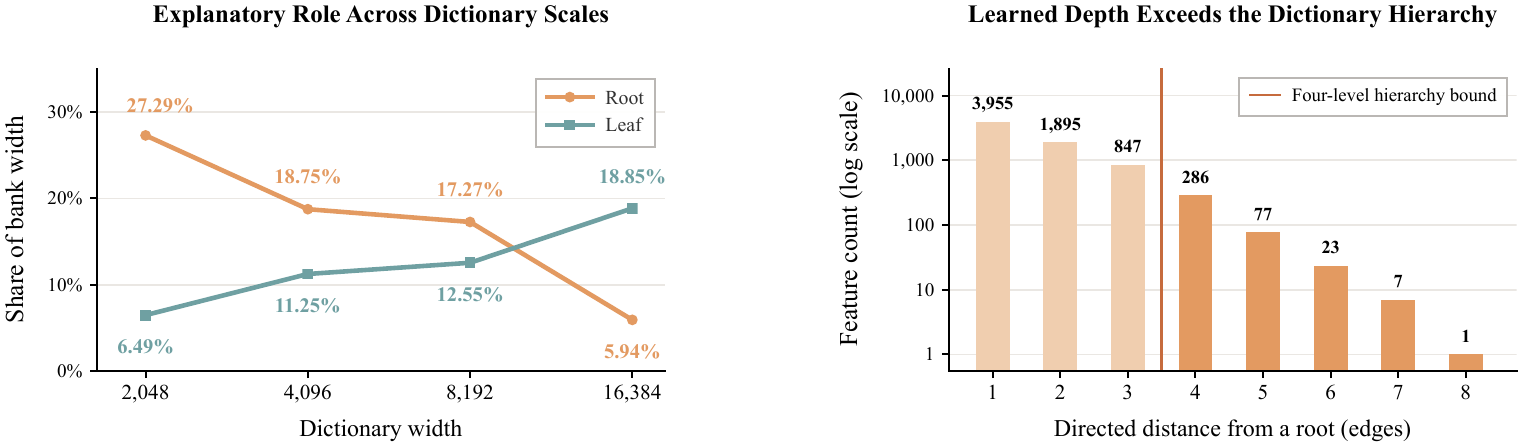}
    \caption{
    \textbf{Dictionary scale organizes graph roles, while graph depth extends
    beyond the four bank levels.}
    \textbf{Left.} Root and leaf shares within each dictionary bank. Roots fall
    from \(27.29\%\) to \(5.94\%\) of bank width, while leaves rise from
    \(6.49\%\) to \(18.85\%\).
    \textbf{Right.} A feature's graph depth is the maximum number of edges among
    all directed paths that begin at a root and end at that feature. The dashed
    line marks the maximum depth of three permitted by a strict progression
    from narrower to wider banks. The recovered graph places \(394\) features
    beyond this bound and reaches depth eight.
    }
    \label{fig:graph_role_and_depth}
\end{figure}

\subsection{Additional Analysis of the Learned Multi-Parent Relation}
\label{app:multi_parent_case_study}

We provide additional activation statistics and reconstruction
comparisons for the relation illustrated in
Figures~\ref{fig:real-case} and \ref{fig:case-coverage}.
The final AG-SAE trained on Gemma-2-2B layer-13 activations
from MiniPile assigns child \(c=7707\) the complete parent
set \(P=\{5963,14785,3242\}\).

\paragraph{Activation Coverage and Specificity.}
All activation statistics use the frozen dictionary with graph
routing disabled and activation thresholds fixed before report
evaluation. On the report split of \(4{,}910{,}199\) token
positions, the child activates at \(484\) positions.
Each parent covers \(92.77\%\)--\(97.73\%\) of these positions,
and all three parents jointly cover \(442/484=91.32\%\).
Conversely, only \(11.25\%\), \(38.44\%\), and \(4.52\%\) of
the activations of features \(5963\), \(14785\), and \(3242\),
respectively, coincide with child activation.
For the complete parent configuration, this proportion reaches
\(442/448=98.66\%\).
Thus, each parent individually covers most child activations
while responding over a substantially broader set of positions.
Their joint activation retains coverage of over nine tenths
of child-active positions and identifies the child's activation
regime with much greater specificity than any individual parent.

\begin{table}[t]
    \centering
    \caption{
    \textbf{Complete parent-set comparison for feature 7707.}
    The fit score used in PSV is one minus the squared prediction
    error for the child's decoder contribution, normalized by
    its total squared norm over the same \(484\) report positions
    where the child is active.
    Nonnegative coefficients are fitted separately for each
    parent set on a disjoint split and frozen for evaluation.
    The matched alternative is the strongest replacement set
    in the evaluated candidate pool.
    The null hypothesis has score \(0\).
    }
    \label{tab:case-parent-set}
    \begingroup
    \small
    \setlength{\tabcolsep}{6pt}
    \renewcommand{\arraystretch}{1.08}
    \begin{tabular}{@{}llr@{}}
        \toprule
        Comparison & Parent set & Held-out fit \(\uparrow\) \\
        \midrule
        \textbf{Complete learned set}
        & \(\{5963,14785,3242\}\)
        & \textbf{0.5497} \\
        Remove 5963
        & \(\{14785,3242\}\)
        & 0.4543 \\
        Remove 14785
        & \(\{5963,3242\}\)
        & 0.4409 \\
        Remove 3242
        & \(\{5963,14785\}\)
        & 0.3460 \\
        Best single parent
        & \(\{14785\}\)
        & 0.2452 \\
        Best matched alternative
        & \(\{5295,14785,3242\}\)
        & 0.4627 \\
        \bottomrule
    \end{tabular}
    \endgroup
\end{table}

\paragraph{Complete Parent-Set Comparisons.}
Table~\ref{tab:case-parent-set} compares the learned parent set
with its subsets and the strongest evaluated matched alternative.
The complete set achieves a held-out fit of \(0.5497\),
compared with \(0.2452\) for the best single parent and
\(0.4543\) for the best two-parent subset.
Removing any parent and refitting the remaining coefficients
reduces the report score by at least \(0.0954\).
Feature \(14785\) is particularly informative because its
responses to \emph{appeal} and \emph{appellant} overlap lexically
with the child. Removing it reduces the score to \(0.4409\),
demonstrating incremental predictive support beyond the other
two parents despite this lexical overlap.
At the same parent-set cardinality, replacing feature \(5963\)
with \(5295\) gives the strongest evaluated matched alternative,
with a score of \(0.4627\).
The learned set exceeds this alternative by \(0.0870\),
supporting the value of its specific parent combination.
Together, the subset and replacement comparisons establish
predictive contributions from every selected parent and favor
the learned combination over the evaluated alternatives.

\paragraph{Nonredundant Child Contribution.}
The child also supplies reconstruction value within the activation
regime shared with its complete parent set.
On the \(442\) report positions where the child and all three
parents are active, adding feature \(7707\) yields a normalized
innovation gain of
\(\mathcal S_{\mathrm{inn}}(c,P)=0.06177\).
The parent-only and parent-plus-child reconstructions use
the same report positions, with nonnegative coefficients fitted
separately on a disjoint split and frozen for evaluation.
Following the definition of \(\mathcal S_{\mathrm{inn}}\),
the reduction in reconstruction error is normalized by
the zero-reconstruction error on these positions.
The parent-set fit measures how well the parents predict
the child's decoder contribution, while the innovation gain
measures the child's additional contribution to reconstructing
the original activations.
The two comparisons therefore provide complementary support
for the learned relation.
The complete parent set supplies joint predictive support,
and the child retains a nonredundant representational
contribution even where their activations closely coincide.

\subsection{Results Across Models and Datasets}
\label{app:cross_setting_results}

AG-SAE recovers validated mixed topology under both an isolated
corpus change and a joint change in model family, layer, and
dictionary configuration.
The PubMed setting holds Gemma-2-2B, layer 13, and the four
dictionary scales fixed while replacing MiniPile with a
biomedical corpus. The Qwen setting retains MiniPile while
changing the model family, layer, number of dictionary scales,
and total dictionary capacity.

\begin{table}[t]
\centering
\caption{
\textbf{AG-SAE dictionary quality, graph composition, and relation
validity across models and datasets.}
Gemma/PubMed uses Gemma-2-2B layer 13 with four dictionary scales;
Qwen/MiniPile uses Qwen3.5-2B-Base layer 18 with two scales and
\(49{,}152\) features in total.
\textbf{A.} EV and \(L_0\) are report-only measurements with graph
routing disabled at each dictionary scale.
Cov.\ \% divides parented features by bank width.
Single \% and Multi \% divide the respective feature counts by
parented features; Root \% and Leaf \% divide their counts by
bank width. Roots have children and no parents; leaves have
parents and no children.
\textbf{B.} Validation follows the frozen report-only protocols of Table~\ref{tab:real-llm-validation}, Panel D. Structures denotes the total number of recovered relations in each cohort.
PSV, NR, SV, and Joint are evaluated under the same frozen report-only
protocol within each cohort, with Joint reporting simultaneous satisfaction
of all three criteria.
}
\label{tab:cross-setting-results}
\label{tab:cross-setting-composition}
\label{tab:cross-setting-validity}
\label{tab:pubmed_composition}
\label{tab:qwen_composition}
\label{tab:pubmed_validity}
\label{tab:qwen_validity}
\resizebox{0.9\linewidth}{!}{%
\begin{minipage}{\linewidth}
\begingroup
\footnotesize
\setlength{\tabcolsep}{2pt}
\renewcommand{\arraystretch}{1.15}
\setlength{\heavyrulewidth}{1pt}
\setlength{\aboverulesep}{2pt}
\setlength{\belowrulesep}{2pt}
\setlength{\abovetopsep}{0pt}
\setlength{\belowbottomsep}{0pt}
\setlength{\parskip}{0pt}

\begin{tabular*}{\linewidth}
{@{\extracolsep{\fill}}llrrrrrrrr@{}}
\toprule
\multicolumn{10}{@{}c@{}}{
    \textbf{A. Dictionary Quality and Graph Composition}
} \\
\midrule
& & \multicolumn{2}{c}{Dictionary}
& \multicolumn{6}{c}{Graph} \\
\cmidrule(lr){3-4} \cmidrule(l){5-10}
Setting & Width & EV & \(L_0\) & Parented & Cov.\ \%
& Single \% & Multi \% & Root \% & Leaf \% \\
\midrule
\multirow{4}{*}{\textbf{Gemma / PubMed}}
& \(2{,}048\)  & 0.660 & 50.29 & 328
& 16.02 & 50.61 & 49.39 & 34.28 & 4.74 \\
& \(4{,}096\)  & 0.686 & 50.12 & 1,082
& 26.42 & 70.15 & 29.85 & 23.07 & 7.08 \\
& \(8{,}192\)  & 0.710 & 49.74 & 2,349
& 28.67 & 74.37 & 25.63 & 17.19 & 14.64 \\
& \(16{,}384\) & 0.727 & 50.36 & 3,700
& 22.58 & 79.35 & 20.65 & 4.72 & 19.46 \\
\midrule
\multirow{2}{*}{\textbf{Qwen / MiniPile}}
& \(16{,}384\) & 0.771 & 49.74 & 3,873
& 23.64 & 54.66 & 45.34 & 20.06 & 12.34 \\
& \(32{,}768\) & 0.798 & 49.71 & 5,689
& 17.36 & 75.73 & 24.27 & 13.77 & 15.32 \\
\bottomrule
\end{tabular*}

\par\vskip 4pt\nointerlineskip
\begin{tabular*}{\linewidth}
{@{\extracolsep{\fill}}llrrrrr@{}}
\toprule
\multicolumn{7}{@{}c@{}}{
    \textbf{B. Relation Validity}
} \\
\midrule
Setting & Cohort & Structures & PSV & NR & SV & Joint \\
\midrule
\multirow{2}{*}{\textbf{Gemma / PubMed}}
& Single-parent & 5,608 & 99.63 & 96.38 & 73.75 & 71.75 \\
& Multi-parent  & 1,851 & 97.00 & 91.38 & 65.50 & 60.75 \\
\midrule
\multirow{2}{*}{\textbf{Qwen / MiniPile}}
& Single-parent & 6,425 & 99.38 & 93.13 & 65.50 & 63.13 \\
& Multi-parent  & 3,137 & 95.13 & 88.25 & 74.38 & 62.25 \\
\bottomrule
\end{tabular*}

\endgroup
\end{minipage}%
}
\end{table}

\subsubsection{Validated Mixed Topology Persists under a Corpus Change}
\label{app:cross_setting_pubmed}

AG-SAE reproduces the main mixed-topology pattern on PubMed
after four refinement cycles
(Table~\ref{tab:cross-setting-results}, Panel A).
Multi-parent relations comprise \(24.82\%\) of parented features,
compared with \(23.55\%\) on MiniPile
(Table~\ref{tab:ag-sae-combined}, Panel B), and maximum graph
depth reaches \(11\).
The topology spans all four dictionary banks, with multi-parent
shares decreasing from \(49.39\%\) to \(20.65\%\) as width grows,
following the same ordering as MiniPile
(Table~\ref{tab:additional_graph_coverage_validity}, Panel A).
Roots become less prevalent and leaves more prevalent with
increasing width, reproducing the scale-dependent role inversion.
Parented features cover \(16.02\%\)--\(28.67\%\) of each bank;
EV increases with dictionary width while native \(L_0\) remains
near \(50\).

Structural reliability also persists under the corpus change
(Table~\ref{tab:cross-setting-results}, Panel B).
Single-parent and multi-parent PSV reach \(99.63\%\) and
\(97.00\%\), within \(0.12\) and \(0.00\) percentage points
of their MiniPile counterparts
(Table~\ref{tab:real-llm-validation}, Panel D).
NR changes by less than one percentage point in both cohorts.
Joint validity reaches \(71.75\%\) for single-parent relations
and \(60.75\%\) for multi-parent relations.
Thus, more than three fifths of relations in each
cohort simultaneously satisfy predictive, nonredundancy,
and semantic criteria.
With the model, layer, dictionary scales, and evaluation
protocols held fixed, these results establish that broad
cross-domain heterogeneity is not required for AG-SAE to
recover validated multi-parent topology.

\subsubsection{Validated Mixed Topology Persists under a New Model and Dictionary Configuration}
\label{app:cross_setting_qwen}

AG-SAE recovers mixed topology and the same direction of
scale-dependent role separation in Qwen3.5-2B-Base under a
larger, two-scale dictionary configuration.
Total capacity increases from \(30{,}720\) to \(49{,}152\)
features, while the number of scales decreases from four
to two; the main setting uses four scales to match the
hierarchical baselines.
After eight refinement cycles, \(3{,}137\) of \(9{,}562\)
parented features have multiple parents, a share of
\(32.81\%\) compared with \(23.55\%\) in the main setting
(Table~\ref{tab:ag-sae-combined}, Panel B).
Within-bank multi-parent shares are \(45.34\%\) and \(24.27\%\)
in the narrower and wider banks, respectively.
Roots become less prevalent and leaves more prevalent in
the wider bank, preserving the role ordering observed on
Gemma with both MiniPile and PubMed.
EV increases with width, with native \(L_0\) near \(50\)
at both scales (Table~\ref{tab:cross-setting-results}, Panel A).

Relational depth extends well beyond the two dictionary scales.
A strict progression from the narrower to the wider bank
permits a maximum depth of one; the recovered graph reaches
depth nine, with \(2{,}880\) features exceeding that bound,
corresponding to \(30.1\%\) of all parented features.
This substantial population demonstrates learned relational
paths beyond the hierarchy defined by dictionary scale.

The recovered Qwen relations also retain joint predictive,
nonredundancy, and semantic validity
(Table~\ref{tab:cross-setting-results}, Panel B).
Single-parent and multi-parent Joint rates reach
\(63.13\%\) and \(62.25\%\), respectively, so more than
three fifths of relations in each cohort satisfy
all three criteria.
Together, the PubMed and Qwen results establish that validated
mixed topology, systematic role separation across dictionary
scales, and relational depth beyond those scales persist under
an isolated corpus change and under a joint change in model
family, layer, scale count, and total dictionary capacity.

\subsection{Additional Causal Intervention Results}
\label{app:causal_training_controls}

Matched training counterfactuals and paired statistics across concepts establish
that the stronger causal control reported in
Figure~\ref{fig:causal_intervention_triptych} is specific to full AG-SAE
training among the evaluated procedures. The gain holds simultaneously in
addition and ablation across most concepts and remains concentrated in the
leading feature at the same Top-1 activation rate.

\paragraph{Only full AG-SAE training strengthens bidirectional causal control.}
Starting from the same pre-refinement checkpoint, we compare full AG-SAE training with two conventional SAE continuation procedures. Normalized continuation retains decoder normalization, and free-scale continuation allows decoder scale to vary under the conventional SAE objective. All three branches use the same data order and training budget of \(40{,}000\) steps. The evaluation preserves the same \(13\) feature identities, fixed before the branch outcomes were assessed. For each feature, we subtract the matched random-control effect separately from its addition and ablation effects, compute the change from the shared checkpoint in each intervention direction, and retain the weaker change. A positive change therefore requires causal control to strengthen in both intervention directions. Table~\ref{tab:causal_training_counterfactuals} reports the mean change from the shared checkpoint in this weaker effect, its bootstrap confidence interval, and the number of features with positive change. Full AG-SAE training increases bidirectional causal control by \(0.867\) logits, with a \(95\%\) confidence interval of \([0.298, 1.506]\), and improves \(10/13\) features. Normalized and free-scale continuation produce mean changes of \(-0.215\) and \(-0.160\) logits and improve only \(3/13\) and \(5/13\) features. Relative to normalized continuation, full AG-SAE achieves a \(1.082\)-logit advantage with a \(95\%\) confidence interval of \([0.450, 1.894]\), favoring \(11/13\) features. The advantage also holds separately in both intervention directions, reaching \(0.794\) logits in addition and \(1.055\) logits in ablation. Under matched initialization, data order, training budget, and feature identities, only the full AG-SAE branch strengthens addition and ablation jointly. Additional conventional SAE optimization and allowing decoder scale to vary do not reproduce the gain.

\begin{table}[t]
\centering
\small
\caption{Matched changes in bidirectional causal control under full AG-SAE
training and two conventional SAE continuation controls.}
\label{tab:causal_training_counterfactuals}
\setlength{\tabcolsep}{5pt}
\renewcommand{\arraystretch}{0.95}
\begin{tabular}{lccc}
\toprule
Condition & Change & \(95\%\) CI & Wins \\
\midrule
Full AG-SAE
    & \(\mathbf{+0.867}\)
    & \(\mathbf{[0.298,\,1.506]}\)
    & \(\mathbf{10/13}\) \\
Normalized continuation
    & \(-0.215\)
    & \([-0.597,\,0.184]\)
    & \(3/13\) \\
Free-scale continuation
    & \(-0.160\)
    & \([-0.546,\,0.235]\)
    & \(5/13\) \\
\bottomrule
\end{tabular}
\end{table}

\begin{table}[t]
\centering
\small
\caption{Paired AG-SAE gains over Vanilla SAE across the \(26\) first-letter
concepts.}
\label{tab:causal_intervention_statistics}
\setlength{\tabcolsep}{4pt}
\renewcommand{\arraystretch}{0.95}
\begin{tabular}{lcccc}
\toprule
Metric & Gain & \(95\%\) CI & Wins & \(p\)-value \\
\midrule
Addition
    & \(\mathbf{+0.573}\)
    & \(\mathbf{[0.308,\,0.893]}\)
    & \(\mathbf{24/26}\)
    & \(1.05\times10^{-5}\) \\
Ablation
    & \(\mathbf{+0.685}\)
    & \(\mathbf{[0.260,\,1.240]}\)
    & \(\mathbf{21/26}\)
    & \(0.0025\) \\
Bidirectional
    & \(\mathbf{+0.674}\)
    & \(\mathbf{[0.281,\,1.210]}\)
    & \(\mathbf{21/26}\)
    & \(0.0025\) \\
Top-1 concentration
    & \(\mathbf{+0.681}\)
    & \(\mathbf{[0.259,\,1.241]}\)
    & \(\mathbf{20/26}\)
    & \(0.0094\) \\
\bottomrule
\end{tabular}
\end{table}

\paragraph{The causal advantage spans concepts and remains concentrated at Top-1.}
Table~\ref{tab:causal_intervention_statistics} aggregates the paired AG-SAE gains over Vanilla SAE across the \(26\) first-letter concepts. Gain denotes the mean difference in logits, and Wins counts concepts favoring AG-SAE. For Bidirectional, we first take the weaker of the addition and ablation effects within each method for each concept, and then compute the paired AG-SAE--Vanilla difference; Top-1 reports the single-feature cumulative
ablation comparison. Intervals are \(95\%\) paired bootstrap confidence intervals, and \(p\)-values use exact two-sided sign tests. AG-SAE improves addition by \(0.573\) logits across \(24/26\) concepts and ablation by \(0.685\) logits across \(21/26\) concepts. Both paired confidence intervals are strictly positive, with exact sign-test \(p\)-values of \(1.05\times10^{-5}\) and \(0.0025\). This weaker-direction comparison yields a \(0.674\)-logit bidirectional gain with a \(95\%\) confidence interval of \([0.281, 1.210]\) across
\(21/26\) concepts. The direction-specific results additionally show simultaneous strengthening of concept writing and erasure across most evaluated concepts. At Top-1, AG-SAE retains a \(0.681\)-logit advantage with a \(95\%\) confidence interval of \([0.259, 1.241]\) across \(20/26\) concepts, despite the identical \(85.1\%\) activation rate of the two methods. This locates the increased causal yield in the leading feature rather than in more frequent feature activation. Together, the matched continuation counterfactuals and paired statistics attribute the observed increase in causal control to the full AG-SAE training paradigm among the evaluated procedures.

\subsection{Additional Component Ablations}
\label{app:component_ablations}

We evaluate how Complete Parent-Set Induction and Absorption
Realignment recover reliable relations and translate structural
information into dictionary improvements.
Figure~\ref{fig:component-ablations} compares induction procedures
on a fixed dictionary, measures the additional gains from dictionary
optimization, and quantifies the reach and effectiveness of
absorption correction.
We then connect these component results to the full refinement
trajectories.

\begin{figure}[t]
    \centering
    \includegraphics[width=0.95\linewidth]
        {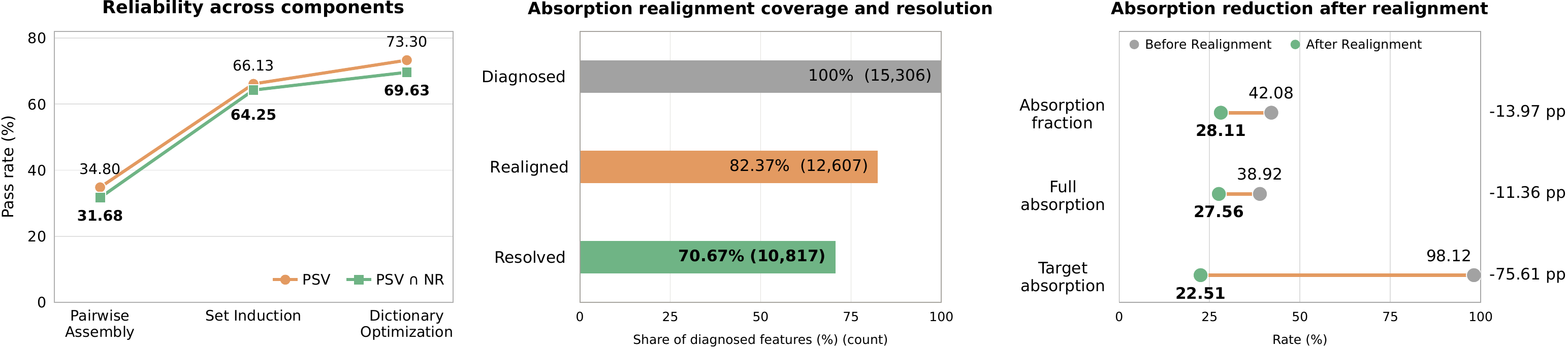}
    \caption{
    \textbf{Complete parent-set induction improves structural
    reliability, and realignment reduces feature absorption.}
    \textbf{Left.} Multi-parent PSV and
    \(\mathrm{PSV}\cap\mathrm{NR}\) under Pairwise Assembly,
    Complete Parent-Set Induction, and Dictionary Optimization.
    Dictionary Optimization denotes the full post-induction dictionary
    update, including Structure-Guided SAE Training and
    Mixed-Topology-Guided Dictionary Refinement, followed by one
    Full Re-Induction on the updated dictionary to obtain the evaluated graph.
    The first two configurations share a fixed dictionary
    and candidate space. Set Induction denotes Complete
    Parent-Set Induction.
    \textbf{Middle.} Realignment coverage and resolution over
    \(15{,}306\) benchmark-diagnosed absorption cases.
    Of these, \(12{,}607\) receive an Absorption Realignment update and
    \(10{,}817\) are subsequently resolved; percentages use all diagnosed
    cases as the denominator.
    \textbf{Right.} Absorption measurements before and after
    realignment. Gray and green mark the before and after
    values, respectively; reductions are in percentage points.
    }
    \label{fig:component-ablations}
\end{figure}

\paragraph{Complete Parent-Set Induction improves structural reliability.}
On the same fixed SAE dictionary and candidate space, Pairwise
Assembly constructs parent sets from independently selected
parent--child edges, while Complete Parent-Set Induction evaluates
each combination as an atomic structural hypothesis.
Identical EV, \(L_0\), and dictionary capacity isolate the
induction procedure's contribution from changes in representation
quality or sparsity.
Complete Parent-Set Induction raises multi-parent PSV from
\(34.80\%\) to \(66.13\%\) (\(+31.33\) percentage points) and
\(\mathrm{PSV}\cap\mathrm{NR}\) from \(31.68\%\) to \(64.25\%\)
(\(+32.57\) points; Figure~\ref{fig:component-ablations}, left).
At the same dictionary operating point, it more than doubles
the fraction of multi-parent relations combining predictive
parent support with a nonredundant child contribution,
directly supporting the complete parent set as the unit of
structural inference.
Dictionary Optimization further raises PSV to \(73.30\%\) and
\(\mathrm{PSV}\cap\mathrm{NR}\) to \(69.63\%\), gains of \(7.17\)
and \(5.38\) percentage points over induction alone.
These comparisons establish two complementary improvements.
Complete Parent-Set Induction extracts more reliable structure
from a fixed representation, and adapting the dictionary
improves relational reliability beyond that structural
selection step.

\paragraph{Absorption Realignment achieves broad coverage and substantial absorption reduction.}
The middle and right panels of Figure~\ref{fig:component-ablations}
report complementary measurements from a targeted experiment
evaluating Absorption Realignment. Starting from a fixed, fully trained SAE, we perform Complete
Parent-Set Induction once and apply Absorption Realignment once
using the resulting graph.
The SAEBench Feature Absorption
diagnostic~\citep{karvonen2025saebench}, which builds on the
first-letter evaluation of \citet{chanin2025absorption},
identifies an evaluation cohort of \(15{,}306\) absorption cases.
The benchmark diagnosis defines this cohort, while induction
and realignment use the SAE's representations and model-internal
quantities.
The middle panel shows that \(12{,}607\) cases (\(82.37\%\))
receive a realignment update and \(10{,}817\) (\(70.67\%\))
are subsequently resolved.
Both rates use the full diagnosed population, including cases
that receive no update or remain unresolved.
The operator thus reaches over four fifths of diagnosed cases
and resolves more than seven tenths of the full cohort.
The right panel reports the corresponding absorption reductions.
Under identical evaluation settings and the same frozen inference
threshold, Absorption Fraction falls from \(42.08\%\) to
\(28.11\%\) (\(-13.97\) percentage points), Full Absorption
from \(38.92\%\) to \(27.56\%\) (\(-11.36\) points), and
target absorption from \(98.12\%\) to \(22.51\%\)
(\(-75.61\) points).
The target correction thus accompanies improvements in both
aggregate absorption measurements.
EV changes by only \(-0.000040\), mean \(L_0\) by \(+0.0258\),
and the dead-feature fraction is unchanged.
Together with the frozen threshold, this stable operating point
makes the absorption reductions informative about feature
organization.
These results demonstrate that Absorption Realignment converts
the graph recovered by Complete Parent-Set Induction into
effective absorption correction, supporting its intended role
in reallocating parent-aligned content and AG-SAE's use of
learned structure to refine the dictionary.

\paragraph{Component effects align with the full training dynamics.}
The refinement trajectories in Figure~\ref{fig:graph_refinement_dynamics}
connect these component effects to the complete learning cycle. Residual Completion updates decline from \(12{,}245\)
in the first transition to \(12\) in the last, consistent with
diminishing demand for corrections to representational gaps
exposed by the recovered structure.
Feature-contribution change \(d_F\) and graph change \(d_G\)
fall to \(9.25\%\) and \(3.06\%\), respectively, while mean
held-out relation gain rises from \(4.56\%\) to \(9.12\%\).
The multi-parent population settles at \(1{,}670\) after
peaking at \(1{,}708\), showing that stronger relation support
accompanies structural consolidation without continued
expansion of the multi-parent population.
Together, these trajectories support progressive
dictionary--graph stabilization as completion demand
and parent-set reassignment diminish.
The fixed-dictionary comparison establishes the benefit
of complete parent-set inference, the isolated realignment
test establishes effective absorption correction, and the
dictionary-optimization gains and refinement trajectories
support their integration into an alternating process
that uses recovered relations to improve the dictionary.

\begin{figure}[t]
    \centering
    \includegraphics[width=0.37\linewidth]{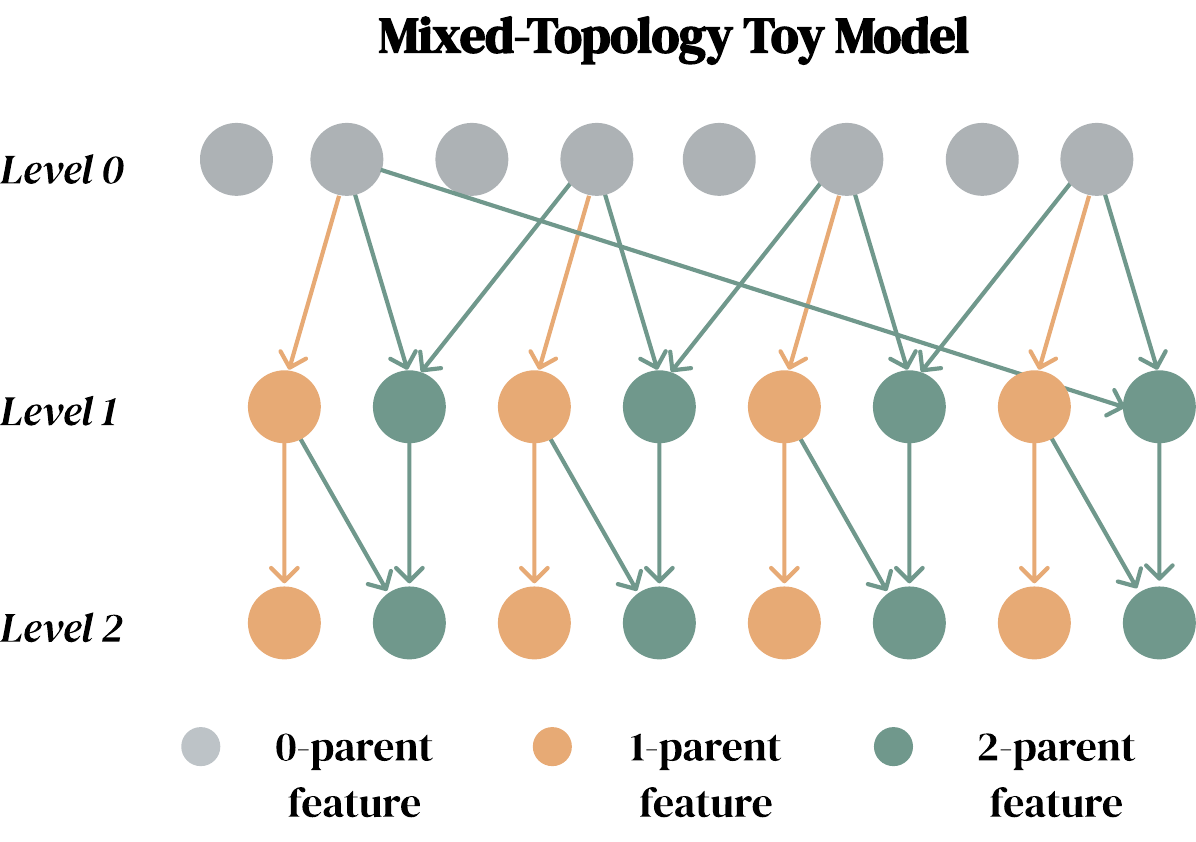}
    \caption{Ground-truth mixed topology of the toy model.
    Gray, orange, and green nodes have zero, one, and two immediate
    parents, respectively. Child activation requires all immediate
    parents to be active.}
    \label{fig:toy_model}
    \vspace{-15pt}
\end{figure}

\section{Experimental Setup and Hyperparameters}
\label{app:experimental_setup}

We specify the data construction, baseline configurations, and evaluation protocols underlying the reported results, together with the core training hyperparameters.

\subsection{Toy Model Construction and Recovery Evaluation}
\label{app:toy_model_setup}

Our toy model follows the SynthSAEBench framework~\citep{chanin2026synthsaebench}, generating sparse linear mixtures of 24 unit directions in 24 dimensions, with Gaussian-copula correlations among candidate activations and Gaussian magnitudes (mean 1.0, standard deviation 0.1) clipped at zero, with zero bias and no added noise. Feature directions are generated with graph-local correlations, and related active features share a common latent scale with feature-specific variation. Extending Matryoshka SAE's parent-conditioned toy model~\citep{bussmann2025matryoshka}, child activation requires all immediate parents to be active.
The three-level graph contains four isolated features, four roots, eight single-parent features, and eight two-parent features (Figure~\ref{fig:toy_model}).
Target activation probabilities for isolated features, roots, level-1 features, and level-2 features are 0.08, 0.20, 0.03, and 0.003, respectively, yielding an expected 1.384 active features per observation.
Before training, we specify 32 hard negatives comprising four correlation-only edges, 12 non-immediate ancestor edges, and the 16 singleton subsets of the eight true two-parent sets.
Each correlation-only pair targets a conditional activation probability of 0.70 for the associated isolated feature given child activation, compared with its marginal probability of 0.08.
Methods share the generated instance, 200,000 validation observations, and 1,000,000 independent test observations, with ground-truth directions and parent assignments reserved for evaluation.
Feature recovery uses Hungarian decoder matching with an absolute cosine threshold of 0.80.
Exact parent-set recovery additionally requires recovery of the child and all true parents and equality of the complete predicted and ground-truth immediate parent sets.
For methods producing a graph, hard-negative rejection measures the absence of forbidden edges and the nonselection of incomplete parent sets, complementing reconstruction and feature recovery with evaluation of the generating relations.

\subsection{LLM Activations and Baseline Comparisons}
\label{app:llm_baseline_setup}

\paragraph{Activation data and comparison setting.}
Following HSAE's choice of model, corpus, and reference dictionary sizes~\citep{luo2026hsae}, we use Gemma-2-2B~\citep{gemmateam2024gemma2} activations on MiniPile~\citep{kaddour2023minipile} from transformer block 13 (zero-based).
All methods share 100 million training and 5 million validation activations, extracted from sequences of up to 512 tokens with padding and the first eight positions excluded, and standardized per coordinate using training statistics.
The reference widths are 2,048, 4,096, 8,192, and 16,384, corresponding to independently reconstructing dictionaries for Vanilla SAE, HSAE, and AG-SAE, and feature groups within a 30,720-feature dictionary for Matryoshka SAE and Tree SAE.
BatchTopK~\citep{bussmann2024batchtopk} targets 50 active features per independent dictionary or full cumulative reconstruction; realized sparsity is reported in the main comparison table.

\paragraph{Vanilla SAE Post-hoc.}
We freeze the four independently trained SAEs and construct edges using the directed scaled masked cosine similarity of \citet[Appendix F]{bussmann2025matryoshka}, with the published threshold of 0.6.
The score computes activation cosine similarity on tokens where the child is active, scaled by the ratio of maximum child to maximum parent activation.
The candidate search space matches AG-SAE, including same-scale and all cross-scale candidates.
Activation-frequency ordering ensures acyclicity, and each child retains up to three highest-scoring qualifying parents.

\paragraph{Matryoshka SAE.}
Following \citet{bussmann2025matryoshka}, a single global BatchTopK operation selects activations, and equally weighted reconstruction losses over cumulative prefixes of 2,048, 6,144, 14,336, and 30,720 features update the shared parameters.
Graph extraction uses the same activation score and threshold between adjacent feature groups.
From the same trained dictionary, the DAG variant retains all qualifying parents, while the tree variant follows the paper's visualization projection and retains only the highest-scoring parent; children without qualifying parents remain unassigned.

\paragraph{Tree SAE.}
Following \citet{cao2026tree}, features occupy four privilege layers, with each feature assigned to the imaginary root or one parent in a lower privilege layer.
Training combines equally weighted cumulative reconstruction losses with layer-specific auxiliary reconstruction for inactive features, retaining dynamic reallocation of inactive children and root fallback.
The per-layer BatchTopK budgets are 34, 7, 5, and 4.
Parent gating follows sparsification, retaining a child only when its parent is active; removed activations are not replaced, so fewer than 50 features may remain active.

\paragraph{HSAE.}
We implement HSAE~\citep{luo2026hsae} with BatchTopK, initializing it from the four pretrained SAEs.
Each level reconstructs the same input.
Following the original method, each child selects one parent by maximum encoder cosine similarity in the preceding level; the lowest-scoring 20\% remain unassigned.
The objective retains the parent--children contribution constraint with the published weight of 0.01, and reconstruction replaces a parent's contribution with the sum of its children's contributions with probability 5\%.
Sparsification precedes substitution, and the constraint uses unperturbed contributions.
Assignments are updated every 5,000 steps and after training; joint optimization processes 20 million cached training activations.
Training budgets and optimization schedules are detailed in
Section~\ref{app:training_configuration}.

\paragraph{Additional model and corpus settings.}
Additional AG-SAE experiments use PubMed~\citep{nlm2025pubmed} with the same Gemma-2-2B layer and dictionary configuration, and Qwen3.5-2B-Base~\citep{qwen2026qwen35} at layer 18 on MiniPile with independent dictionaries of 16,384 and 32,768 features.

\subsection{Training Configuration}
\label{app:training_configuration}

For the main Gemma-2-2B comparison, Vanilla SAE~\citep{huben2024sparse}
receives 120k, 160k, 220k, and 300k updates for dictionary sizes of
2,048, 4,096, 8,192, and 16,384, respectively. AG-SAE uses the same
per-dictionary training budgets, including 25k, 25k, 50k, and 75k
initialization updates. AG-SAE's algorithmic stopping criterion is the
joint dictionary--graph stability rule defined in
Appendix~\ref{app:full_reinduction_stability}. For the main baseline
comparison, however, we additionally impose the matched training budgets
to prevent AG-SAE from receiving an advantage from additional optimization.
Accordingly, AG-SAE stops either when the joint stability criterion is
satisfied or when the matched training budget is exhausted.
Each dictionary is frozen upon reaching its update ceiling, and the
returned graph is freshly induced from the final dictionaries. Matryoshka SAE~\citep{bussmann2025matryoshka} and Tree SAE~\citep{cao2026tree} each receive 248k updates, matching the average of the four reference budgets weighted by their initial dictionary widths and corresponding to 507.904M activation presentations at batch size 2,048. HSAE~\citep{luo2026hsae} initializes from the fully trained Vanilla dictionaries and receives an additional 20k joint updates at batch size 1,024, with each shared batch updating all four dictionaries and providing 20.48M additional activation presentations per dictionary. All methods use the same cache of 100M training activations, with presentation counts including repeated cache passes. LLM baselines use AdamW with zero weight decay, gradient clipping at 1.0, and a peak learning rate of $3\times10^{-4}$, with 10\% linear warmup followed by cosine decay to zero within each training stage; HSAE's joint stage starts a fresh optimizer and learning-rate schedule. AG-SAE uses AdamW with zero weight decay, gradient clipping at 1.0, 500 warmup steps, and a 2,000-step structural-loss ramp per cycle, with learning rates of $2\times10^{-4}$, $1.5\times10^{-4}$, $10^{-4}$, $6\times10^{-5}$, and $3\times10^{-5}$ for cycles one through five and $3\times10^{-5}$ thereafter. All toy-model methods receive 40,000 Adam updates with $\beta=(0.5,0.9375)$ and batch size 256, corresponding to 10.24M training observations per method, using a learning rate of $3\times10^{-2}$ for the first 38,000 updates and $3\times10^{-3}$ for the final 2,000 updates; AG-SAE performs its full fixed-graph update, including
Structure-Guided SAE Training and Mixed-Topology-Guided Dictionary
Refinement, during these final 2,000 updates.

\vspace{-5pt}

\subsection{AG-SAE Hyperparameters}
\label{app:agsae_hyperparameters}

\begin{table}[t]
\centering
\caption{\textbf{Model-specific AG-SAE hyperparameters.}}
\label{tab:agsae_model_specific}
\footnotesize
\setlength{\tabcolsep}{3pt}
\renewcommand{\arraystretch}{0.88}
\setlength{\arrayrulewidth}{0.4pt}

\begin{tabular}{
!{\vrule width 0.8pt}
l|c|c
!{\vrule width 0.8pt}
}
\noalign{\hrule height 0.8pt}
\rule{0pt}{2.3ex}\textbf{Hyperparameter}
    & \textbf{Gemma}
    & \textbf{Qwen} \\[0.1ex]
\noalign{\hrule height 0.8pt}
$\lambda_{\mathrm{cov}}$
    & $0.006$ & $0.012$ \\
\hline
$\lambda_{\mathrm{inn}}$
    & $0.0015$ & $0.003$ \\
\hline
Initial $\lambda_{\mathcal I}$
    & $0.05$ & $0.015$ \\
\hline
LR, cycles 1--5
    & $(2,1.5,1,0.6,0.3)\times10^{-4}$
    & $(3,2.2,1.5,0.9,0.6)\times10^{-4}$ \\
\hline
LR, cycles 6--8
    & $3\times10^{-5}$
    & $6\times10^{-5}$ \\
\hline
LR, cycles 9--16
    & $3\times10^{-5}$
    & $3\times10^{-5}$ \\
\hline
Updates/cycle, 9--16
    & $40{,}000$
    & $40{,}000$ \\
\hline
Min. residual gain
    & $10^{-4}$ & $2\times10^{-4}$ \\
\hline
Min. residual validation/fitting ratio
    & $0.25$ & $0.50$ \\
\noalign{\hrule height 0.8pt}
\end{tabular}
\vspace{-10pt}
\end{table}

For both Gemma and Qwen, we restrict parent sets to $|P|\leq 3$, retrieve 24 candidates, and use an exact-search pool of 12, with $\tau_{\mathrm{cmp}}=10^{-3}$, $\tau_{\mathrm{cov}}=0.40$, and $\tau_{\mathrm{inn}}=3\times10^{-4}$. The representational-support threshold $\tau_{\mathrm{rep}}$ is calibrated separately for each model, dictionary width, and parent-set cardinality using six random and six wrong-parent controls, a $0.99$ control quantile, a $0.001$ margin, and a $0.01$ floor. The reconstruction coefficients $\boldsymbol\beta$ used for child innovation are fitted by unregularized NNLS, requiring 128 fitting and 64 validation events for both child and joint support. The nonnegative prediction coefficients
$\boldsymbol\alpha^{c,P}$ used by
$\mathcal L_{\mathrm{rep}}$ are likewise fitted by unregularized NNLS
on child-active samples at the beginning of each fixed-graph training
phase. AG-SAE training and refinement use $\lambda_{\delta}=10^{-6}$, $\lambda_{\mathrm{sp}}=0.1$,
$\lambda_{\mathrm{rep}}=0.02$, $\lambda_{\mathrm{cmp}}=1$, $m=0.005$,
$T=0.02$, $\eta=0.15$, and smooth-gate temperature $0.5$ for both models;
model-specific parameters are summarized in Table~\ref{tab:agsae_model_specific}. The overall structural weight $\lambda_{\mathcal I}$ targets a $3\%$ structural gradient contribution with a 2,000-update ramp. Training uses AdamW with batch size 2,048, zero weight decay, gradient clipping at $1.0$, and target $L_0=50$ per bank. Residual Completion requires 64 fitting and 64 validation events,
uses a reuse cosine-similarity threshold of $0.95$, calibrates newly
initialized feature thresholds at the $0.995$ activation quantile,
and caps capacity growth at $5\%$. Joint stability uses $\gamma_G=0.05$ and $\gamma_F=0.10$ over 262,144 validation samples. Training is capped at 16 cycles, followed by fresh re-induction of the terminal graph.

\vspace{-5pt}

\subsection{Structural and Semantic Evaluation}
\label{app:structural_semantic_evaluation}

\paragraph{Structural evaluation.}
Recovered relations are evaluated using the same scoring rules and thresholds across methods, with dictionaries, graphs, and thresholds frozen before report evaluation. For the main Gemma-2-2B/MiniPile comparison, we use fixed evaluation cohorts of \(800\) single-parent, \(400\) two-parent, and \(400\) three-parent relations from each frozen output for every arity supported by the method. Each cohort is sampled once and reused for PSV, NR, SV, and Joint. Joint pools these fixed cohorts over all relation arities supported by the method. Thus, methods supporting both single- and multi-parent relations are evaluated over \(1{,}600\) sampled relations for Joint, while single-parent-only methods use the \(800\)-relation single-parent cohort.
Nonnegative coefficients are fitted on a separate split
and held fixed for evaluation. PSV compares the complete
parent set with the null, proper subsets, and evaluated
matched alternatives of the same cardinality in predicting the
child's decoder contribution on held-out child-active samples.
The score is one minus squared prediction error divided by
the child's contribution energy; passing requires a margin
of at least $10^{-3}$ over the strongest competitor.
NR compares reconstruction using the parents alone with
reconstruction using the parents and child on jointly active
samples; passing requires an error reduction of at least
$10^{-3}$ after normalization by the zero-reconstruction error.

\vspace{-5pt}

\paragraph{Semantic evaluation.}
AG-SAE and all baselines use identical prompts, evidence counts,
and sampling procedures within each task, with Qwen3-30B-A3B
as the evaluator~\citep{qwen2025qwen3}.
Following SAEBench~\citep{karvonen2025saebench}, feature
descriptions are generated independently from ten
highest-activation and five activation-weighted contexts,
then frozen before relation evaluation.
Single-parent SV adopts HSAE's subset-or-specialization
criterion~\citep{luo2026hsae}, using descriptions and
representative activating contexts. Multi-parent SV evaluates
joint semantic coverage from the descriptions, requiring
every parent to be relevant and rejecting mere co-occurrence.
Judges receive no method identities, feature identifiers,
or quantitative scores, and semantic outputs never enter
induction. Within each arity, SV reports the fraction of sampled relations satisfying the semantic criterion, and Joint reports the fraction satisfying PSV, NR, and SV simultaneously.

\noindent
Single-parent semantic evaluation uses the following prompt.

\begin{lstlisting}[style=semanticprompt]
[System]
We are studying neurons in a sparse autoencoder (SAE). Each neuron activates on specific words, substrings, or concepts in short documents, with the activating text marked by <<...>>.

Determine whether the Child neuron's activating concept is a subset or a more specific version of the Parent neuron's activating concept.

Output exactly:
HaveRelationship: Yes or No
Confidence: High, Medium, or Low

Do not include any other text.

[User]
Parent neuron
Frozen semantic label: {parent_label}
Top-activation text 1: {parent_top_1}
Top-activation text 2: {parent_top_2}
Activation-weighted text: {parent_weighted}

Child neuron
Frozen semantic label: {child_label}
Top-activation text 1: {child_top_1}
Top-activation text 2: {child_top_2}
Activation-weighted text: {child_weighted}
\end{lstlisting}

\noindent
Multi-parent semantic evaluation uses the following prompt,
with parent labels presented in randomized order.

\begin{lstlisting}[style=semanticprompt]
[System]
You judge whether a set of Parent SAE feature labels jointly forms a semantic envelope that covers one Child feature label.

Answer Yes when the combined semantic scope of the Parents covers the stable core meaning of the Child, so the Child is a natural semantic specialization, intersection, or combination within the Parent meanings. Different Parents may cover different aspects of the Child. No individual Parent is required to contain the entire Child.

Before deciding, check every Parent one by one. Every Parent must be directly relevant to a core semantic aspect of the Child. If even one Parent is irrelevant, answer No. Do not answer Yes merely because one Parent already covers the Child while ignoring the other Parents.

Also answer No when the Parents and Child are merely related or commonly co-occur, when the combined Parent scope misses an essential part of the Child meaning, or when the Child is not a coherent specialization, intersection, or combination of the Parent meanings.

Judge only the frozen feature labels. Do not use graph structure, activation strength, quantitative scores, feature IDs, or method identity. Parent order has no meaning.

Return exactly one JSON object:
{"joint_coverage":"Yes","reason":"brief phrase"}
or
{"joint_coverage":"No","reason":"brief phrase"}

[User]
Parent feature labels:
{numbered_parent_labels}

Child feature label:
{child_label}
\end{lstlisting}

\subsection{Causal Intervention Protocol}
\label{app:causal_intervention_protocol}

We compare AG-SAE and Vanilla SAE on all 26 first-letter concepts, using the task introduced by \citet{chanin2025absorption} and SAEBench's single-feature sparse probing procedure~\citep{karvonen2025saebench}.
For each dictionary, probing on the training split selects one leading feature per letter; feature identities, inference thresholds, and addition scales are frozen before evaluation.
Both methods use the same 416 held-out prompts, with eight target-letter and eight non-target-letter examples per concept.
Following SAE feature-intervention studies~\citep{templeton2024scaling}, we intervene at the queried token's layer-13 residual stream in Gemma-2-2B, expressing decoder vectors in the original residual-stream coordinates.
Addition applies the selected decoder vector scaled by its median nonzero activation on target-class training examples to non-target prompts; ablation subtracts its observed thresholded contribution from target prompts.
These interventions evaluate causal control at each feature's native contribution scale.
Controls use random directions orthogonal to the target decoder and unrelated letter features, each matched to the actual intervention's L2 norm on each prompt.
The target margin is the target-letter logit minus the mean of the other 25 letter logits.
Effects are margin increases for addition and margin decreases for ablation relative to the same prompt without intervention, including zero-activation ablation cases.
To measure concentration, we cumulatively ablate the leading one to five features in the frozen sparse-probe ranking, computing all contributions from the same unmodified activation.
Single-feature ablation and the cumulative curve are evaluated in separate
bfloat16 forward passes, batching the four intervention controls and the six
prefixes \(k=0,\ldots,5\), respectively; the Top-1 curve value is therefore an
independently executed estimate of the same leading-feature intervention.
Causal yield per active feature is the mean ablation effect divided by the mean number of active intervened features.
We average prompts within each letter and compare methods across letters using paired bootstrap 95\% confidence intervals and exact two-sided sign tests.

\end{document}

%% file: math_commands.tex
\usepackage{amsmath,amsfonts,bm}

\def\eqref#1{equation~\ref{#1}}

\def\1{\bm{1}}

\DeclareMathAlphabet{\mathsfit}{\encodingdefault}{\sfdefault}{m}{sl}
\SetMathAlphabet{\mathsfit}{bold}{\encodingdefault}{\sfdefault}{bx}{n}

